\documentclass[twoside,11pt]{article}

% Any additional packages needed should be included after jmlr2e.
% Note that jmlr2e.sty includes epsfig, amssymb, natbib and graphicx,
% and defines many common macros, such as 'proof' and 'example'.
%
% It also sets the bibliographystyle to plainnat; for more information on
% natbib citation styles, see the natbib documentation, a copy of which
% is archived at http://www.jmlr.org/format/natbib.pdf
\usepackage{jmlr2e}
\usepackage{algorithm}
\usepackage{algorithmic}
\usepackage{amsmath}
%\usepackage{amsthm}
%\usepackage{natbib}

% Definitions of handy macros can go here

\newtheorem{lem}{Lemma}

 \newtheorem{property}{Property}

    \def\independenT#1#2{\mathrel{\setbox0\hbox{$#1#2$}%
    \copy0\kern-\wd0\mkern4mu\box0}}

%\renewcommand{\thefootnote}{\fnsymbol{footnote}}
% Heading arguments are {volume}{year}{pages}{submitted}{published}{author-full-names}

%\jmlrheading{}{}{}{4/14}{}{Navodit Misra and Ercan E. Kuruoglu}

% Short headings should be running head and authors last names

\ShortHeadings{Stable Graphical Models}{Misra and Kuruoglu}
\firstpageno{1}

\begin{document}

\title{Stable Graphical Models}

\author{\name{Navodit Misra} \email{misra@molgen.mpg.de}\\
       \addr  Max Planck Institute for Molecular Genetics\\
       \addr Ihnestr. 63-73, 14195 Berlin, Germany
        \AND
      \name{Ercan E. Kuruoglu} \email{ercan.kuruoglu@isti.cnr.it}\\
        \addr ISTI-CNR\\
        \addr Via G. Moruzzi 1, 56124 Pisa, Italy\\ and\\
        \addr Max Planck Institute for Molecular Genetics\\
        \addr Ihnestr. 63-73, 14195 Berlin, Germany
    }
\editor{}
\maketitle

\begin{abstract}%	<- trailing '%' for backward compatibility of .sty file
Stable random variables are motivated by the central limit theorem for densities with (potentially) unbounded variance and can be thought of as natural generalizations of the Gaussian distribution to skewed and heavy-tailed phenomenon. In this paper, we introduce $\alpha$-stable graphical ($\alpha$-SG) models, a class of multivariate stable densities that can also be represented as Bayesian networks whose edges encode linear dependencies between random variables. One major hurdle to the extensive use of stable distributions is the lack of a closed-form analytical expression for their densities. This makes penalized maximum-likelihood based learning computationally demanding. We establish theoretically that the {\it Bayesian information criterion} (BIC) can asymptotically be reduced to the computationally more tractable {\it minimum dispersion criterion} (MDC) and develop {\tt StabLe}, a structure learning algorithm based on MDC. We use simulated datasets for five benchmark network topologies to empirically demonstrate how {\tt StabLe} improves upon ordinary least squares (OLS) regression. We also apply {\tt StabLe} to microarray gene expression data for lymphoblastoid cells from 727 individuals belonging to eight global population groups. We establish that {\tt StabLe} improves test set performance relative to OLS via ten-fold cross-validation. Finally, we develop {\tt SGEX}, a method for quantifying differential expression of genes between different population groups.
\end{abstract}

\begin{keywords}
Bayesian networks, stable distributions, linear regression, structure learning, gene expression, differential expression\end{keywords}

\section{Introduction}

Stable distributions have found applications in modeling several real-life phenomena~\citep{ Berger63,Mandel63, Nikias, gallardo2000use, achim2001novel} and have robust theoretical justification in the form of the generalized central limit theorem~\citep{feller1968introduction,Nikias, Nolan13}. Several special instances of multivariate generalization of stable distributions have also been described in literature~\citep{Samorodnitsky94, NolanMulti}. Multivariate stable densities have previously been applied to modeling  wavelet coefficients with bivariate $\alpha$-stable distributions~\citep{achim2005image}, inferring parameters for linear models of network flows~\citep{Bickson} and stock market fluctuations~\citep{bonato2012modeling}.

In this paper, we describe $\alpha$-stable graphical ($\alpha$-SG) models, a new class of multivariate stable densities that can be represented as directed acyclic graphs (DAG) with arbitrary network topologies. We prove that these multivariate densities also correspond to linear regression-based Bayesian networks and establish a model selection criterion that is asymptotically equivalent to the {\it Bayesian information criterion} (BIC). Using simulated data for five benchmark network topologies, we empirically show how $\alpha$-SG models improve structure and parameter learning performance for linear regression networks with additive heavy-tailed noise.
 
One motivation for the present work comes from potential applications to computational biology, especially in genomics, where Bayesian network models of gene expression profiles are a popular tool ~\citep{friedman2000using, ben2000tissue, friedman2004inferring}. A common approach to network models of gene expression involves learning linear regression-based Gaussian graphical models. However, the distribution of experimental microarray intensities shows a clear skew and may not necessarily be best described by a Gaussian density (Section~\ref{GeneEx}). Another aspect of microarray intensities is that they represent the average mRNA concentration in a population of cells. Assuming the number of mRNA transcripts within each cell to be independent and identically distributed, the generalized central limit theorem suggests that the observed shape should asymptotically (for large population size) approach a stable density~\citep{feller1968introduction,Nikias,Nolan13}. Univariate stable distributions have previously been used to model gene expression data~\citep{diego2009modelling, salas2009heavy} and it is therefore natural to consider multivariate $\alpha$-stable densities as models for mRNA expression for larger sets of genes. In Section~\ref{GeneEx} we provide empirical evidence to support this reasoning. We further develop $\alpha$-stable graphical ($\alpha$-SG) models for  quantifying differential expression of genes from microarray data belonging to phase III of the HapMap project~\citep{HapMap3,montgomery2010transcriptome,stranger2012patterns}.

The rest of the paper is structured as follows : Section~\ref{intro} describes the basic notation and background concepts for Bayesian networks and stable densities. Section~\ref{SGmodels} introduces $\alpha$-SG models and establishes that these models are Bayesian networks that also represent multivariate stable distributions with finite spectral measures. Section~\ref{Learning} establishes the equivalence of the popular but (in this case) computationally challenging {\it Bayesian information criterion} (BIC) for structure learning and the computationally more tractable {\it minimum dispersion criterion} (MDC), for all $\alpha$-SG models that represent symmetric densities. Furthermore, we establish how data samples from any $\alpha$-SG model can be combined to generate samples from a partner symmetric $\alpha$-SG model with identical network topology and regression coefficients.  Using these theoretical results we design {\tt StabLe}, an efficient algorithm  that combines ordering-based search (OBS)~\citep{Teyssier} for structure learning with the iteratively re-weighted least squares (IRLS) algorithm~\citep{byrd1979convergence} for learning the regression parameters via least $l_p$ norm estimation. Finally, in Section~\ref{Validation} we implement the structure and parameter learning algorithm on simulated and expression microarray data sets.
%Our results on Cauchy distributed random variables show marked improvement over least squares method for estimating regression parameters.

\section{Methods}
In this section we develop the theory and algorithms for learning $\alpha$-SG models from data. First, we discuss some well-established results for Bayesian networks and $\alpha$-stable densities.
\subsection{Background}\label{intro}

We begin with an introduction to Bayesian network models~\citep{Pearl} for the joint probability distribution of a finite set of random variables $\mathcal{X}= \{X_1,\ldots X_m\}$. A Bayesian network $B(G,\Theta)$ is specified by a directed acyclic graph (DAG) $G$, whose vertices represent random variables in $\mathcal{X}$ and a set  of parameters $\Theta=\{\theta_i| X_i\in \mathcal{X}\}$, that determine the conditional probability distribution $p(X_i|Pa(X_i),\theta_i)$ for each variable $X_i\in\mathcal{X}$ given the state of its parents $Pa(X_i)\subseteq \mathcal{X}\setminus\{X_i\}$ in $G$~\citep{Koller}. We will overload the symbols $X_j$ and $Pa(X_j)$ to represent both sets of random variables and their instantiations. The directed acyclic graph $G$ implies a factorization of the joint probability density into terms representing  each variable $X_i$ and its parents  $Pa(X_i)$ (called a {\it family}) such that :

\begin{equation}
P_B(\mathcal{X})=\prod_{i=1}^{|\mathcal{X}|}p(X_i | Pa(X_i), \theta_i)
\end{equation}

The dependence of $p(X_i | Pa(X_i),\theta_i)$ on $\theta_i$  is usually specified by an appropriately chosen  family of parametrized probability densities for the random variables, such as Gaussian or $\log$-Normal. In this paper, we will use multivariate stable densities to model the random variables in $\mathcal{X}$. The primary motivation for modeling continuous random variables using stable distributions comes from the  generalization of the central limit theorem to distributions with unbounded variance~\citep{feller1968introduction, Nikias}. In the limit of large $N$, all sums of $N$ independent, identically distributed random variables approach a stable density. A formal definition for stable random variables can be provided in terms of the characteristic function (Fourier transform of the density function)

% Definition of univariate Stable distributions
\begin{definition}\label{StableDef} A {\it stable random variable} $X\sim~S_{\alpha}( \beta, \gamma, \mu)$, is defined for each $\alpha\in(0,2]$, $\beta\in[-1,1]$, $\gamma\in(0,\infty)$ and $\mu\in(-\infty,\infty)$. The probability density $f(X|\alpha,\beta,\gamma,\mu)$ is implicitly specified by a characteristic function $\phi(q|\alpha,\beta,\gamma,\mu)$ :
\begin{eqnarray}
\phi(q|\alpha,\beta,\gamma,\mu) &\equiv& \mathbb{E}[\exp(\imath qX)]\nonumber \\
&=&\int_{-\infty}^{\infty} f(X|\alpha,\beta,\gamma,\mu)\exp(\imath qX) dX\nonumber\\
&=&\exp\bigl(\imath\mu q -\gamma|q|^\alpha [1-\imath\beta~\mathrm{sign}(q) r(q,\alpha)]\bigr) \nonumber\\
 \mathrm{ where,} ~r(q,\alpha)&=&\left\{\begin{array}{cc} \tan\frac{\alpha\pi}{2} & ~\alpha\neq1 \\-\frac{2}{\pi}\log|q| & ~\alpha=1\end{array}\right.\nonumber
\end{eqnarray}
\end{definition}
The parameters $\alpha, \beta, \gamma$ and $\mu$ will be called the characteristic exponent, skew, dispersion and location respectively. Unfortunately, the density $f(X|\alpha,\beta,\gamma,\mu)$ does not have a closed-form analytical expression except for the three well-known stable distributions (Figure~\ref{fig: Sdensity} and Table~\ref{LCN}).

\iffalse
\begin{eqnarray}
X&\sim&\textrm{L\'evy}(\gamma,\mu)=S_{0.5}(1,\gamma,\mu) \nonumber \\
\implies f(X|0.5,1,\gamma,\mu)&=&\frac{\gamma}{\sqrt{2\pi}}\frac{1}{(x-\mu)^{3/2}}\exp\bigl(-\frac{\gamma^2}{2(x-\mu)}\bigr), ~\mu<x<\infty \nonumber \\
X&\sim& \textrm{Cauchy}(\gamma,\mu)=S_{1.0}(0,\gamma,\mu) \nonumber \\
\implies f(X|1.0,0,\gamma,\mu)&=&\frac{1}{\pi}\frac{\gamma}{\gamma^{2}+(x-\mu)^2}, ~-\infty<x<\infty~ \nonumber \\
X&\sim&\textrm{Normal}(\mu,\sigma)=S_{2.0}(0,\gamma=\frac{\sigma^2}{2},\mu) \nonumber \\
\implies f(X|2.0,0,\gamma,\mu)&=& \frac{1}{2\sqrt{\pi\gamma}}\exp\bigl(-\frac{(x-\mu)^2}{4\gamma}\bigr), ~-\infty<x<\infty\nonumber
\end{eqnarray}
\fi

Except for the Gaussian case, the asymptotic (large $x$) behavior of univariate $\alpha$-stable densities shows Pareto or power law tails~\citep{levy1925calcul}. The following lemma formalizes this observation~\citep{Samorodnitsky94, Nolan13}
\begin{lem}\label{StableTails}If $X\sim S_{\alpha}(\beta,\gamma,0)$ with $0<\alpha<2$, then as $x\rightarrow\infty$
\begin{eqnarray}
Pr(X>x)&\sim& (1+\beta)\gamma C_\alpha x^{-\alpha}\nonumber\\
C_\alpha=(2\int{_0}{^\infty} x^{-\alpha} \sin x dx)^{-1}&=&\frac{1}{\pi}\Gamma(\alpha)\sin(\frac{\alpha\pi}{2})\nonumber
\end{eqnarray}
\end{lem}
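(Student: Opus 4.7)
I would extract the tail asymptotics directly from the characteristic function in Definition~\ref{StableDef} by Fourier inversion. Since $\gamma$ enters $\phi$ only through the combination $\gamma|q|^\alpha$, we have the scaling identity $\gamma^{-1/\alpha}X \sim S_\alpha(\beta,1,0)$ whenever $X \sim S_\alpha(\beta,\gamma,0)$ (for $\alpha \neq 1$), which reduces the problem to the case $\gamma=1$; the factor of $\gamma$ in the lemma then reappears by the substitution $x \mapsto x\gamma^{-1/\alpha}$, since $(x\gamma^{-1/\alpha})^{-\alpha} = \gamma x^{-\alpha}$.

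With $\gamma=1$ and assuming $\alpha \neq 1$, I would invoke the Gil--Pelaez inversion
\[
P(X>x) = \frac{1}{2} + \frac{1}{\pi}\int_0^\infty \frac{\mathrm{Im}\bigl(e^{-\imath qx}\phi(q)\bigr)}{q}\,dq,
\]
split $\phi(q) = 1 + (\phi(q)-1)$, and observe that the constant piece contributes $\int_0^\infty (-\sin qx)/q\,dq = -\pi/2$, cancelling the offset and leaving
\[
P(X>x) = \frac{1}{\pi}\int_0^\infty \frac{\mathrm{Im}\bigl(e^{-\imath qx}(\phi(q)-1)\bigr)}{q}\,dq.
\]
Rescaling $u = qx$ and inserting the near-origin expansion $\phi(q)-1 = -|q|^\alpha \psi + O(|q|^{2\alpha})$ with $\psi = 1 - \imath\beta\tan(\alpha\pi/2)$ converts the leading-order contribution to
\[
-\frac{x^{-\alpha}}{\pi}\int_0^\infty u^{\alpha-1}\,\mathrm{Im}\bigl(e^{-\imath u}\psi\bigr)\,du + o(x^{-\alpha}).
\]
Using the Mellin evaluations $\int_0^\infty u^{\alpha-1}\sin u\,du = \Gamma(\alpha)\sin(\alpha\pi/2)$ and $\int_0^\infty u^{\alpha-1}\cos u\,du = \Gamma(\alpha)\cos(\alpha\pi/2)$, together with the trigonometric cancellation $\sin(\alpha\pi/2)+\beta\tan(\alpha\pi/2)\cos(\alpha\pi/2) = (1+\beta)\sin(\alpha\pi/2)$, the inner integral evaluates to $-(1+\beta)\Gamma(\alpha)\sin(\alpha\pi/2)$, producing exactly $(1+\beta)C_\alpha x^{-\alpha}$ after multiplication by the prefactor.

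The main technical obstacle is rigorously justifying the term-by-term expansion: the Mellin integrals above are only conditionally convergent, and in fact diverge in the standard sense once $\alpha \geq 1$. The standard remedy is to rotate the integration contour into the lower complex half-plane, where $e^{-\imath u}$ decays and $\phi$ admits an analytic extension by its super-exponential tail in that direction. The $O(|q|^{2\alpha})$ remainder from the characteristic-function expansion must then be controlled uniformly, which I would do by splitting the $q$-integral at $|q| \sim x^{-\delta}$ for a suitable $\delta \in (0,1)$ and bounding the small-$q$ piece using the series expansion and the large-$q$ piece using the rapid decay of $\phi$. The excluded $\alpha=1$ case (where $r(q,1)$ introduces a logarithmic singularity and the simple scaling identity fails) would be handled by an analogous computation after absorbing the $\log|q|$ term, or alternatively deduced by a continuity argument in $\alpha$.
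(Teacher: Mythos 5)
The paper does not actually prove Lemma~\ref{StableTails}: it is stated as a known fact and attributed to Samorodnitsky--Taqqu and Nolan, so there is no internal proof to compare against, and any derivation you give is necessarily ``a different route.'' Your route --- Gil--Pelaez inversion, cancellation of the $\tfrac12$ against the $\phi\equiv 1$ piece, rescaling $u=qx$, and inserting the near-origin expansion $\phi(q)-1=-|q|^{\alpha}\bigl(1-\imath\beta\tan(\alpha\pi/2)\bigr)+O(|q|^{2\alpha})$ --- is a legitimate and standard way to get the tail. The leading-order algebra checks out: $\operatorname{Im}\bigl(e^{-\imath u}\psi\bigr)=-\sin u-\beta\tan(\alpha\pi/2)\cos u$, the Mellin evaluations give $-(1+\beta)\Gamma(\alpha)\sin(\alpha\pi/2)$, and the prefactor $-x^{-\alpha}/\pi$ yields $(1+\beta)C_\alpha x^{-\alpha}$ with exactly the paper's $C_\alpha=\pi^{-1}\Gamma(\alpha)\sin(\alpha\pi/2)$; the scaling reduction to $\gamma=1$ via Property~\ref{Scale} and the substitution $x\mapsto x\gamma^{-1/\alpha}$ is also correct, and the constant agrees with the L\'evy and Cauchy special cases in Table~\ref{LCN}. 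The textbook proofs in the cited references instead go through series expansions of the density (Bergstr\"om) or Tauberian theorems (Feller); your approach is more self-contained relative to Definition~\ref{StableDef} but pushes all the difficulty into the error term.

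That error term is where your sketch is genuinely thin, and you should be aware of exactly where. For $1\le\alpha<2$ the tail piece $\int_{q\gtrsim x^{-\delta}}q^{-1}\operatorname{Im}(e^{-\imath qx}\phi(q))\,dq$ cannot be bounded by absolute values (that gives something not $o(x^{-\alpha})$), so you must exploit the oscillation of $e^{-\imath qx}$, e.g.\ by integration by parts --- but the boundary and derivative terms then reintroduce powers $q^{\alpha-2}$ near the splitting point, and showing the whole remainder is $o(x^{-\alpha})$ rather than merely $O(x^{-\delta'})$ for some smaller $\delta'$ requires care (typically iterating the expansion of $\phi-1$ to order $|q|^{2\alpha}$ and rotating each term's contour separately). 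Also, the contour rotation is justified for the explicit power-law terms $u^{\alpha-1}e^{-\imath u}$, but your claim that $\phi$ itself extends with decay into the lower half-plane is false in general (e.g.\ $\beta$ near $1$ and $\alpha>1$ makes $\operatorname{Re}\bigl(q^{\alpha}(1-\imath\beta\tan(\alpha\pi/2))\bigr)$ change sign under a $\pi/2$ rotation), so the rotation must be applied only after the expansion. Finally, ``deduced by a continuity argument in $\alpha$'' is not a proof for $\alpha=1$: tail asymptotics are a statement about an iterated limit and do not pass through pointwise convergence in $\alpha$; the $\alpha=1$, $\beta\ne 0$ case has to be done directly with the $\log|q|$ term carried along.
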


\begin{figure}[htb!]
\begin{center}
\includegraphics[scale=.4, angle=-90]{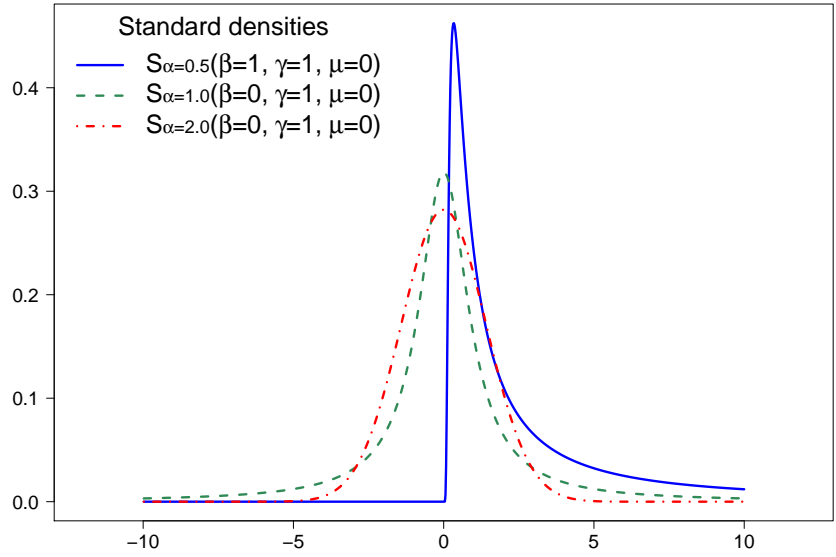}
\end{center}
\caption{The three instances of analytically known univariate $\alpha$-stable densities $S_\alpha(\beta,\gamma,\mu)$. L\'evy$(\gamma,\mu) \sim S_{0.5}(1, \gamma, \mu)$ (solid blue curves), Cauchy$(\gamma,\mu)\sim S_{1.0}(0,\gamma,\mu)$ (dashed green curves) and Normal$(\mu, \sigma)\sim S_{2.0}(0,\frac{\sigma^2}{2},\mu)$ (dot-dashed red curves). }\label{fig: Sdensity}
\end{figure}

\begin{table}
\begin{center}
\begin{tabular}{cccc}\hline Distribution & $S_\alpha(\beta,\gamma,\mu)$ & $f(X|\alpha,\beta,\gamma,\mu)$ & Support \\ \hline\textrm{L\'evy}$(\gamma,\mu)$ & $S_{0.5}(1,\gamma,\mu) $ & $\frac{\gamma}{\sqrt{2\pi}}\frac{1}{(x-\mu)^{3/2}}\exp\bigl(-\frac{\gamma^2}{2(x-\mu)}\bigr)$ & $\mu<x<\infty$ \\\textrm{Cauchy}$(\gamma,\mu) $&$ S_{1.0}(0,\gamma,\mu) $ & $\frac{1}{\pi}\frac{\gamma}{\gamma^{2}+(x-\mu)^2}$ &$ -\infty<x<\infty$ \\\textrm{Normal}$(\mu,\sigma)$ & $S_{2.0}(0,\gamma=\frac{\sigma^2}{2},\mu)$ &  $\frac{1}{2\sqrt{\pi\gamma}}\exp\bigl(-\frac{(x-\mu)^2}{4\gamma}\bigr)$ &$ -\infty<x<\infty$ \\ \hline
\end{tabular}\caption{Closed-form analytical expressions for L\'evy, Cauchy and Normal densities and the corresponding $\alpha$-stable parameters.}\label{LCN}
\end{center}
\end{table}
A word on the notation used throughout this paper. We will use the symbol $\|Y\|_p=(\sum_{\lambda} |Y_\lambda|^p)^{1/p}$ to represent the $l_p$ norm of a vector. The $l_p$ norm of a vector representing $N$ instantiations of a random variable $Z$ is related to the $p^{th}$ moment  $E(|Z|^p) = \|Z\|_p^{p}/N$. For heavy-tailed $\alpha$-stable densities, one convenient method for parameter estimation is via {\it fractional lower order moments} (FLOM) for $p<\alpha$~\citep{hardin1984skewed, Nikias}. Later, we will discuss FLOM-based parameter learning in greater detail (Section~\ref{ParamL}).

\subsection{$\alpha$-Stable Graphical Models}\label{SGmodels}

We can now introduce Bayesian network models reconstructed from stable densities that have compact representations for the characteristic function. Univariate $\alpha$-stable densities can be generalized to represent multivariate stable distributions that are defined as follows~\citep{Samorodnitsky94}, 

\begin{definition}\label{MultiDef} A $d$-dimensional {\it multivariate stable distribution} over $\mathcal{X}=\{X_1,\ldots X_d\}$ is defined by an $\alpha\in(0,2]$, $\mu\in \mathbb{R}^d$ and a spectral measure $\Lambda$ over the $d$-dimensional unit sphere $S_d$, such that the characteristic function
\begin{eqnarray}
\Phi(q|\alpha,\mu,\Lambda)&\equiv&\mathbb{E}[\exp(\imath q^T \mathcal{X})] \nonumber\\
&=& \exp\Big(-\int_{S_{d}}\psi(s^T q|\alpha)\Lambda(ds) + \imath\mu^T q\Big) \nonumber\\
\mathrm{where, } ~\psi(u|\alpha)&=&|u|^\alpha(1-\imath~\mathrm{sign}(u)r(u,\alpha))\nonumber
 \end{eqnarray} 
\end{definition}
% \alpha- SG definition
\begin{definition}\label{ASGdef} An {\it $\alpha$-stable graphical} ($\alpha$-SG) model  $B(G,\Theta)$
is a probability distribution over $\mathcal{X}$ such that
\begin{eqnarray}
&1.&Z_{j}\equiv X_j - \sum_{X_k\in Pa(X_j)} w_{jk}X_k\sim S_{\alpha}(\beta_{j},\gamma_{j},\mu_{j})\nonumber\\
&2.& Z_j ~\mathrm{is~independent~of}~ Z_k~, ~\mathrm{if }~ Z_j\neq Z_k,~\forall X_j\in\mathcal{X}\nonumber
\end{eqnarray}
where $Pa(X_j)\subseteq \mathcal{X}\setminus \{X_j\}$ are the parent nodes of $X_j$ in the directed acyclic graph $G$ and $\Theta$ describes the distribution parameters 
\begin{eqnarray}
 w_{jk}\in\mathbb{R},&&W_j=\{w_{jk}| X_k\in Pa(X_j)\},\nonumber\\
 \theta_j=\{\alpha,\beta_j, \gamma_j,\mu_j\}\cup W_j,&&\Theta=\{\theta_i|X_i\in\mathcal{X}\}\nonumber
\end{eqnarray}
\end{definition}
It is straightforward to see that $B(G,\Theta)$ is indeed a Bayesian network.

%	Lemma 1 (\alpha-SG is a Bayesian network)
\begin{lem}\label{BN} $B(G,\Theta)$ in Definition~\ref{ASGdef} represents a Bayesian network\end{lem}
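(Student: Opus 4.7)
My plan is to exploit the DAG structure of $G$ together with the mutual independence of the driving noise variables $\{Z_j\}$ to establish the Bayesian-network factorization $P_B(\mathcal{X}) = \prod_j p(X_j \mid Pa(X_j),\theta_j)$.

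First, since $G$ is acyclic, I fix a topological ordering of its vertices, which I may assume without loss of generality to be $X_1,\ldots,X_d$, so that $Pa(X_j)\subseteq\{X_1,\ldots,X_{j-1}\}$ for every $j$. I then argue by induction on $j$ that each $X_j$ is an affine function of $\{Z_1,\ldots,Z_j\}$ alone: the base case is immediate because a root node in this ordering has no parents, so $X_1 = Z_1$, and the inductive step follows from the defining relation $X_j = Z_j + \sum_{X_k\in Pa(X_j)} w_{jk}X_k$ together with the induction hypothesis applied to each $X_k \in Pa(X_j)$.

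Next, I would use condition~2 of Definition~\ref{ASGdef} to conclude that $Z_j$ is independent of $\{X_1,\ldots,X_{j-1}\}$, since the latter is a measurable function of $\{Z_1,\ldots,Z_{j-1}\}$, which is independent of $Z_j$. Conditioning $X_j$ on all its predecessors therefore amounts to a deterministic shift of $Z_j$, so the conditional density is $p(X_j \mid X_1,\ldots,X_{j-1}) = f(X_j - \sum_{X_k\in Pa(X_j)} w_{jk}X_k \mid \alpha,\beta_j,\gamma_j,\mu_j)$, which depends on the past only through $Pa(X_j)$. This identifies the conditional density as $p(X_j \mid Pa(X_j),\theta_j)$.

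Finally, applying the chain rule of probability along the chosen topological ordering yields $P_B(\mathcal{X}) = \prod_{j=1}^d p(X_j \mid X_1,\ldots,X_{j-1}) = \prod_{j=1}^d p(X_j \mid Pa(X_j),\theta_j)$, which is the defining factorization of a Bayesian network with DAG $G$ and parameter set $\Theta$. The argument is quite short, and the only genuinely conceptual step is the noise-independence observation that $Z_j$ is independent of everything preceding it in topological order; this is where the mutual (rather than merely pairwise) independence of the $Z_j$'s imposed in Definition~\ref{ASGdef} is crucially used.
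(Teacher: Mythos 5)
Your proof is correct, but it takes a different route from the paper's. The paper argues globally: after fixing a topological ordering it observes that the linear map from $(X_1,\ldots,X_d)$ to $(Z_1,\ldots,Z_d)$ is triangular with unit diagonal, hence has Jacobian $1$, and then obtains the factorization in one step by applying the change-of-variables formula to the product density $\prod_j f(Z_j|\alpha,\beta_j,\gamma_j,\mu_j)$ and identifying each factor with $p(X_j\mid Pa(X_j),\theta_j)$. You instead argue locally via the chain rule: you show by induction that each $X_j$ is an affine function of $Z_1,\ldots,Z_j$, deduce that $Z_j$ is independent of the entire past $\{X_1,\ldots,X_{j-1}\}$, and conclude that each conditional density $p(X_j\mid X_1,\ldots,X_{j-1})$ collapses to $p(X_j\mid Pa(X_j),\theta_j)$. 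The paper's version is shorter once the triangularity observation is made; yours avoids the change-of-variables formula entirely and has the merit of verifying the local Markov property explicitly, which is arguably closer to what ``is a Bayesian network'' means. Both arguments rest on the same structural fact (triangularity of the noise-to-variable map along a topological order), and both require the $Z_j$'s to be \emph{mutually} independent rather than merely pairwise independent --- the paper's factorization $P_B(Z_1,\ldots,Z_d)=\prod_j f(Z_j|\cdot)$ needs this just as much as your independence-of-the-past step does, so your explicit flag of this point is a fair reading of Definition~\ref{ASGdef} rather than a gap.
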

\begin{proof}
Let $d=|\mathcal{X}|$. First note that every directed acyclic graph can be used to infer an ordering (not necessarily unique) on the variables in $\mathcal{X}$ such that all parents of each variable have a lower order than the variable itself. Suppose we index each variable with its order in an ordering compatible with the DAG, such that $X_i$ has order $i$. The proof rests on the fact that the transformation matrix from $\{Z_i\}$ to $\{X_i\}$ for such a graph is lower triangular, with each diagonal entry equal to 1. Since the determinant of a triangular matrix equals the product of its diagonal entries, the Jacobian  for the transformation (or the determinant of the transformation matrix), $|\frac{\partial(Z_1,\ldots Z_d)}{\partial(X_1,\ldots X_d)}|=1$. Furthermore, since the noise variables $Z_j$'s are independent of each other 
\begin{eqnarray}
P_B(Z_1, \ldots Z_d) &=& \prod_{j=1}^{d}f(Z_{j}|\alpha,\beta_{j},\gamma_{j},\mu_{j})\nonumber\\
\mathrm{ also,}~ p(X_j |Pa(X_j), \theta_j)&=&  f(Z_{j}|\alpha,\beta_{j},\gamma_{j},\mu_{j})\nonumber\\
\implies P_B(\mathcal{X})&=&P_B(Z_1, \ldots Z_d)|\frac{\partial(Z_1,\ldots Z_d)}{\partial(X_1,\ldots X_d)}|\nonumber\\
\implies P_B(\mathcal{X})&=&\prod_{j=1}^{d}p(X_j |Pa(X_j), \theta_j) |\frac{\partial(Z_1,\ldots Z_d)}{\partial(X_1,\ldots X_d)}|\nonumber\\
\implies P_B(\mathcal{X})&=&\prod_{j=1}^{d}p(X_j |Pa(X_j), \theta_j) \nonumber
\end{eqnarray}
Hence, $B(G,\Theta)$ is a Bayesian network.
\end{proof}
Before establishing the fact that an $\alpha$-SG model is a multivariate stable density in the sense of Definition~\ref{MultiDef}, we prove the following result (proof is provided in Appendix A) :

%	Lemma 2 ( Prod. 1-d \phi --> multidimensional \Phi )
\begin{lem}\label{specprod} Every $d$-dimensional distribution with a characteristic function of the form
\begin{equation}
\Phi(q|\alpha,\tilde{\mu},\Lambda)=\prod_{k=1}^{d}\phi(c_{k}^Tq|\alpha,\beta_k,\gamma_k,\mu_k)~~\mathrm{where, }~c_k,q\in\mathbb{R}^d\nonumber
\end{equation}
represents a multivariate stable distribution with a finite spectral measure $\Lambda$.
\end{lem}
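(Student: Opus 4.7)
The plan is to show that the given product form of characteristic functions can be rewritten as an exponential of the form $\exp(-\int_{S_d}\psi(s^T q|\alpha)\Lambda(ds)+\imath\tilde\mu^T q)$ by choosing $\Lambda$ to be a finite sum of point masses on $S_d$ (indexed by the vectors $c_k$) and folding any residual linear-in-$q$ terms into the location vector $\tilde\mu$.

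First, I would take the logarithm of the product, so that the statement reduces to matching
\[
\sum_{k=1}^{d}\bigl[\imath\mu_k(c_k^T q)-\gamma_k|c_k^T q|^{\alpha}\bigl(1-\imath\beta_k\,\mathrm{sign}(c_k^T q)\,r(c_k^T q,\alpha)\bigr)\bigr]
\]
with $-\int_{S_d}\psi(s^T q|\alpha)\Lambda(ds)+\imath\tilde\mu^T q$. For each $k$ with $c_k\neq 0$, write $c_k=\|c_k\|_2\hat c_k$ with $\hat c_k\in S_d$, and split the contribution so that $|c_k^T q|^{\alpha}=\|c_k\|_2^{\alpha}|\hat c_k^T q|^{\alpha}$. (If $c_k=0$ the factor is just $1$ and contributes nothing.) The natural guess for $\Lambda$ is the atomic measure
\[
\Lambda=\sum_{k=1}^{d}\Bigl[\tfrac{\gamma_k\|c_k\|_2^{\alpha}(1+\beta_k)}{2}\,\delta_{\hat c_k}+\tfrac{\gamma_k\|c_k\|_2^{\alpha}(1-\beta_k)}{2}\,\delta_{-\hat c_k}\Bigr],
\]
whose weights are nonnegative because $|\beta_k|\le 1$, and whose total mass is the finite number $\sum_k\gamma_k\|c_k\|_2^{\alpha}$.

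For $\alpha\neq 1$, I would verify directly that this choice works. Here $r(\cdot,\alpha)=\tan(\alpha\pi/2)$ is a constant, so evaluating $a\,\psi(\hat c_k^T q|\alpha)+b\,\psi(-\hat c_k^T q|\alpha)$ with $a,b$ as in the two atoms above produces $(a+b)|\hat c_k^T q|^{\alpha}-\imath(a-b)\,\mathrm{sign}(\hat c_k^T q)|\hat c_k^T q|^{\alpha}\tan(\alpha\pi/2)$, which matches $\gamma_k\|c_k\|_2^{\alpha}$ times the required bracket exactly; the remaining $\imath\mu_k(c_k^T q)$ pieces aggregate into $\imath\tilde\mu^T q$ with $\tilde\mu=\sum_k\mu_k c_k$.

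The main obstacle is the $\alpha=1$ case, where $r(u,1)=-\tfrac{2}{\pi}\log|u|$ is not even in $u$ and also depends on the magnitude $\|c_k\|_2$. Expanding $\log|c_k^T q|=\log\|c_k\|_2+\log|\hat c_k^T q|$ splits the skew contribution into (i) a piece $\propto(\hat c_k^T q)\log|\hat c_k^T q|$ that is correctly reproduced by the same two-atom recipe when applied to $\psi(\cdot|1)$, and (ii) a residual piece linear in $q$ of the form $-\imath\gamma_k\beta_k\tfrac{2}{\pi}\|c_k\|_2(\log\|c_k\|_2)(\hat c_k^T q)$. I would absorb this residual, together with $\sum_k\imath\mu_k(c_k^T q)$, into the location vector $\tilde\mu$. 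Once both cases are handled, the resulting $\Lambda$ is a finite positive Borel measure on $S_d$ and the characteristic function has exactly the form required by Definition~\ref{MultiDef}, completing the argument.
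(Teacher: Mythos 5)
Your proposal is correct and follows essentially the same route as the paper's Appendix~A proof: the same two-atom spectral measure $\Lambda_k=\tfrac{\gamma_k\|c_k\|_2^{\alpha}}{2}\bigl((1+\beta_k)\delta_{\hat c_k}+(1-\beta_k)\delta_{-\hat c_k}\bigr)$ and the same absorption of the residual $-\tfrac{2\beta_k\gamma_k}{\pi}\log\|c_k\|_2$ term into the location vector for $\alpha=1$. Your explicit remarks that the atom weights are nonnegative (so $\Lambda$ is a genuine finite measure) and that $c_k=0$ is harmless are small additions the paper leaves implicit.
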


We are now in a position to establish that $\alpha$-SG models imply a multivariate stable density with a spectral measure concentrated on a finite number of points over the unit sphere.

%	Lemma 3 ( \alpha-SL --> mulivariate-\phi )
\begin{lem}\label{asg} Every $\alpha$-SG model represents a multivariate stable distribution with a finite spectral measure of the form in Lemma~\ref{specprod}.\end{lem}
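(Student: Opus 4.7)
The plan is to show that the joint characteristic function of $\mathcal{X}$ under an $\alpha$-SG model factors into exactly the product form required by Lemma~\ref{specprod}, and then invoke that lemma directly.

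First I would fix a topological ordering of the DAG $G$, so that $Pa(X_j)\subseteq\{X_1,\ldots,X_{j-1}\}$. In this ordering the system of defining equations $Z_j = X_j - \sum_{X_k\in Pa(X_j)} w_{jk} X_k$ takes the matrix form $Z = A\mathcal{X}$, where $A$ is lower triangular with unit diagonal. Hence $A$ is invertible and $B \equiv A^{-1}$ is also lower triangular with unit diagonal, giving $\mathcal{X} = B Z$. This is the same invertible triangular change of variables used in Lemma~\ref{BN}, so no further justification of existence is needed.

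Next I would compute the characteristic function of $\mathcal{X}$. For any $q\in\mathbb{R}^d$,
\begin{eqnarray}
\Phi_\mathcal{X}(q) &=& \mathbb{E}[\exp(\imath q^T \mathcal{X})] \;=\; \mathbb{E}[\exp(\imath q^T B Z)] \;=\; \mathbb{E}[\exp(\imath (B^T q)^T Z)]. \nonumber
\end{eqnarray}
Because the noise variables $Z_1,\ldots,Z_d$ are mutually independent by Definition~\ref{ASGdef}, the expectation factorizes across coordinates:
\begin{eqnarray}
\Phi_\mathcal{X}(q) &=& \prod_{j=1}^{d} \mathbb{E}[\exp(\imath (B^T q)_j Z_j)] \;=\; \prod_{j=1}^{d} \phi(c_j^T q \,|\, \alpha,\beta_j,\gamma_j,\mu_j), \nonumber
\end{eqnarray}
where $c_j\in\mathbb{R}^d$ denotes the $j$-th column of $B$, and I have used Definition~\ref{StableDef} together with $Z_j\sim S_\alpha(\beta_j,\gamma_j,\mu_j)$ to identify each factor with the univariate stable characteristic function $\phi$.

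This is exactly the product form appearing in Lemma~\ref{specprod}, with the vectors $\{c_j\}_{j=1}^d$ being the columns of $A^{-1}$. Lemma~\ref{specprod} then immediately yields that $\Phi_\mathcal{X}$ is the characteristic function of a multivariate stable distribution in the sense of Definition~\ref{MultiDef}, with a finite spectral measure $\Lambda$ (supported on at most $2d$ points on $S_d$, obtained from the directions $\pm c_j/\|c_j\|$). I expect the only delicate point to be keeping the column-versus-row bookkeeping of the triangular change of variables straight; once that is written out, the rest is simply the factorization of a product of independent exponentials and an appeal to Lemma~\ref{specprod}.
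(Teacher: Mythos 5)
Your proof is correct, and it reaches the same destination as the paper — the product form $\prod_{j}\phi(c_j^Tq|\alpha,\beta_j,\gamma_j,\mu_j)$ followed by an appeal to Lemma~\ref{specprod} — but by a genuinely more direct route. The paper argues by induction on $|\mathcal{X}|$: it orders the variables so that $X_m$ has no descendants, integrates $X_m$ out of the density factorization from Lemma~\ref{BN} by the change of variable $X_m\mapsto Z_m$, obtains $\Phi_B(q)=\Phi_{\tilde B}(\tilde q)\,\phi(q_m|\alpha,\beta_m,\gamma_m,\mu_m)$ with $\tilde q_j=q_j+w_{mj}q_m$ for parents $X_j$ of $X_m$, and then unwinds the recursion to build the vectors $\tilde s_k$ one peeled-off variable at a time. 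You instead observe that the defining relations read $Z=A\mathcal{X}$ with $A$ unit lower triangular, invert once to get $\mathcal{X}=A^{-1}Z$, and factor $\mathbb{E}[\exp(\imath(B^Tq)^TZ)]$ across coordinates using the mutual independence of the $Z_j$ (the same independence assumption the paper itself invokes in Lemma~\ref{BN}, so nothing extra is being assumed). What your version buys is an explicit closed form for the projection vectors — the $c_j$ are exactly the columns of $A^{-1}$ — where the paper's induction only specifies them through the recursion for $\tilde s_k$; it also avoids working with the (analytically unavailable) densities $f(Z_j|\cdot)$ entirely, staying in the characteristic-function domain throughout. The one bookkeeping point you flag, rows versus columns in $(B^Tq)_j=c_j^Tq$, is handled correctly, and your remark that the resulting spectral measure sits on at most $2d$ points $\pm c_j/\|c_j\|_2$ matches the ansatz used in the paper's Appendix~A proof of Lemma~\ref{specprod}.
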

\begin{proof}
We will prove the lemma by induction. First, observe that every Bayesian network can be used to assign an ordering (not unique) such that $Pa(X_j)\subseteq \{X_1\ldots X_j-1\}$. As before, we will use such an ordering to index each random variable in $\mathcal{X}$, such that $X_{|\mathcal{X}|}$ has no descendants.
The base case of the lemma, where $|\mathcal{X}|=1$ is clearly true. Assume that the lemma is true for all Bayesian networks with $|\mathcal{X}|=m-1$. Then for any Bayesian network $B$ with $|\mathcal{X}|= m$ random variables
\begin{eqnarray}
\Phi_B(q) &\equiv&  \mathbb{E}[\exp(\imath q^T\mathcal{X})] \nonumber\\
&=& \int \prod_{j=1}^{|\mathcal{X}|}dX_j  f(Z_{j}|\alpha,\beta_{j},\gamma_{j},\mu_{j})\exp(\imath q_jX_j)\nonumber \\
&=&\int \Big[\prod_{j=1}^{m-1}dX_j   f(Z_{j}|\alpha,\beta_{j},\gamma_{j},\mu_{j})\exp(\imath q_jX_j)\Big]\int dX_m   f(Z_{m}|\alpha,\beta_{m},\gamma_{m},\mu_{m})\exp(\imath q_mX_m)\nonumber \\
&=&\int \Big[\prod_{j=1}^{m-1}dX_j   f(Z_{j}|\alpha,\beta_{j},\gamma_{j},\mu_{j})\exp(\imath \tilde{q_j}X_j)\Big]\int dZ_m   f(Z_{m}|\alpha,\beta_{m},\gamma_{m},\mu_{m})\exp(\imath q_mZ_m)\nonumber \\
&=&\Phi_{\tilde{B}}(\tilde{q})\phi(q_m|\alpha,\beta_m,\gamma_m,\mu_m)\nonumber \\
&&\mathrm{where }~\tilde{B}~\mathrm{ is ~the ~Bayes ~net ~on } ~\tilde{\mathcal{X}}=\mathcal{X}\setminus\{X_m\},\nonumber\\
&&\mathrm{and }~\tilde{q}_j=q_j + w_{mj}q_m|Pa(X_m)\cap\{X_j\}|~\forall ~ X_j\in \tilde{\mathcal{X}}\nonumber
\end{eqnarray}
\hspace{0.5cm} Since by assumption,
\begin{eqnarray}
\Phi_{\tilde{B}}(\tilde{q})&=&\prod_{k=1}^{m-1}\phi(s_k^T\tilde{q}|\alpha,\beta_k,\gamma_k,\mu_k)\nonumber\\
\implies \Phi_B(q)&=&\phi_{\tilde{B}}(\tilde{q})\phi(q_m|\alpha,\beta_m,\gamma_m,\mu_m)\nonumber\\
&=&\prod_{k=1}^{m}\phi(\tilde{s}_k^Tq|\alpha,\beta_k,\gamma_k,\mu_k),~\mathrm{ where : }\nonumber\\
\tilde{s}_{k}^Tq &=&
\left\{
\begin{array}{cc}
 \sum_{j=1}^{m-1}s_{k,j}(q_j + w_{mj}q_m|Pa(X_m)\cap\{X_j\}|)& k<m\\
q_m&k=m
\end{array}\right.\nonumber
\end{eqnarray}
Therefore, $\Phi_B(q)$ represents a $m$-dimensional multivariate stable distribution with a finite spectral measure~(Lemma~\ref{specprod}). Therefore, by induction, every $\alpha$-SG model represents a multivariate stable distribution with a finite spectral measure of the form in Lemma~\ref{specprod}.
\end{proof}

\subsection{Learning $\alpha$-SG Models}\label{Learning}
It is straight forward to use the characterization of stable random variables in Definition~\ref{StableDef} to verify the following well-known properties~\citep{Samorodnitsky94},
 
 % Stable dist properties: 1. Aggregation
\begin{property}\label{Aggregation}If $X_1\sim S_{\alpha}(\beta_1, \gamma_1, \mu_1)$ and $X_2\sim S_{\alpha}(\beta_2, \gamma_2, \mu_2)$ are independent stable random variables, then $Y=X_1+X_2 \sim  S_{\alpha}( \beta, \gamma, \mu)$, with
\begin{eqnarray*}
\beta=\frac{\beta_1\gamma_1+\beta_2\gamma_2}{\gamma_1 +\gamma_2} ~,& \gamma=(\gamma_1+\gamma_2)~,& \mu=\mu_1+\mu_2
\end{eqnarray*}
\end{property}

 % Stable dist properties: 2. Scale & Shift
\begin{property}\label{Scale}If $X \sim S_{\alpha}( \beta, \gamma, \mu)$ and $c, d \in\mathbb{R}$, then
\begin{eqnarray}
cX + d &\sim&\left\{\begin{array}{cc}S_{\alpha}\Big(\mathrm{sign}(c)\beta, |c|^{\alpha}\gamma, c\mu + d\Big) ~,& \alpha\neq1 \\S_{\alpha}\Big(\mathrm{sign}(c)\beta, |c|\gamma, c(\mu - \frac{2\gamma\beta\ln|c|}{\pi}) + d \Big) ~,&\alpha=1\end{array}\right.\nonumber
\end{eqnarray}
\end{property}

A popular method for structure learning in Bayesian network models is based on the {\it Bayesian information criterion} (BIC) which is also equivalent to the minimum description length (MDL) principle~\citep{Schwarz, Heckerman00}.

%	BIC definition
\begin{definition} Given a data set $D=\{D_1, \ldots, D_{N}\}$, the {\it Bayesian Information Score} $S_{BIC}(B|D)$ for a Bayesian network $B(G,\Theta)$ is defined as,
\begin{equation}
S_{BIC}(B|D)= \sum_{D_{\lambda}\in D}\log\big[P_B(D_\lambda)\big] -  \sum_{X_i\in \mathcal{X}}\frac{|Pa(X_i)|}{2}\log N \nonumber
\end{equation}
The {\it Bayesian information criterion (BIC)} selects the Bayesian network that maximizes this score over the space of all directed acyclic graphs $G$ and parameters $\Theta$.
\end{definition}

The major stumbling block in using stable densities is due to the fact that there is no known closed-form analytical expression for them (apart from special cases representing Gaussian, Cauchy and Levy distributions). This makes BIC based inference computationally demanding due to the marginal likelihood  term $P_B[D_\lambda]$. One main contribution of this paper is an efficient method of learning the network structure and parameters for $\alpha$-SG models. The next lemma establishes a new result that is useful in efficiently  solving the learning problem.

%	Lemma 4 ( likelihood --> dispersion + stuff )
\begin{lem}\label{lk_gamma}
Given a data set $D_Y=\{Y_1, \ldots, Y_{N}\}$ generated from a stable random variable $Y\sim S_{\alpha}(\beta,\gamma,\mu)$
\begin{eqnarray}
\sum_{\lambda=1}^{N}\log\big[f(Y_\lambda|\alpha,\beta,\gamma,\mu)\big]&=& -N\Big(\log\gamma +  h(Y|\alpha,\beta)\Big)\nonumber\\
\mathrm{where, }\lim_{N\rightarrow\infty} h(Y|\alpha,\beta)&=&- \int dY f(Y|\alpha,\beta,1,0) \log f(Y|\alpha,\beta,1,0)\nonumber\\
&=& H\Big[S_{\alpha}(\beta,1,0)\Big]\nonumber
\end{eqnarray}
\end{lem}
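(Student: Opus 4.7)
My plan is to reduce the expression to a sum of log-densities of a standardized $S_{\alpha}(\beta,1,0)$ sample by applying Property~\ref{Scale}, and then identify the $N\to\infty$ limit of that sample average as the differential entropy of the standard law via the strong law of large numbers.

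Concretely, for $\alpha\neq1$ I would define standardized variables $U_\lambda := (Y_\lambda-\mu)/\gamma^{1/\alpha}$. Property~\ref{Scale}, applied with $c=\gamma^{-1/\alpha}$ and $d=-c\mu$, gives $U_\lambda\sim S_{\alpha}(\beta,1,0)$, and the change-of-variables formula propagates this to the densities as
$$f(Y_\lambda|\alpha,\beta,\gamma,\mu) = \gamma^{-1/\alpha}\,f(U_\lambda|\alpha,\beta,1,0).$$
Taking the logarithm and summing over $\lambda$ separates the expression into a deterministic piece proportional to $\log\gamma$ plus an empirical average
$$h(Y|\alpha,\beta) := -\frac{1}{N}\sum_{\lambda=1}^{N}\log f(U_\lambda|\alpha,\beta,1,0),$$
which depends only on $\alpha$ and $\beta$ through the common law of the $U_\lambda$'s. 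The $\alpha=1$ subcase uses the same standardization together with the logarithmic correction to the location dictated by Property~\ref{Scale}; because that correction is a constant depending only on $\gamma,\beta,c$, it drops out after the density ratio is taken and the structure of the identity is preserved.

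For the limit, since the $U_\lambda$'s are i.i.d.\ draws from $S_{\alpha}(\beta,1,0)$, the strong law of large numbers applied to $-\log f(U_\lambda|\alpha,\beta,1,0)$ yields $h(Y|\alpha,\beta)\to H[S_{\alpha}(\beta,1,0)]$ almost surely as $N\to\infty$. The main obstacle is verifying the integrability hypothesis $\mathbb{E}\bigl|\log f(U|\alpha,\beta,1,0)\bigr|<\infty$ that the strong law requires. Boundedness of stable densities controls $\log f$ from above, while Lemma~\ref{StableTails} provides power-law decay $f(u)\sim C|u|^{-\alpha-1}$ in the tails, so $|\log f(U)|$ grows only logarithmically in $|U|$; combined with the finiteness of the fractional moments $\mathbb{E}|U|^p$ for $p<\alpha$, this makes the required expectation finite and the strong law applies, giving the claimed differential-entropy limit.
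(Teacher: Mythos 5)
Your proposal is correct and follows essentially the same route as the paper: standardize via Property~\ref{Scale} to an $S_{\alpha}(\beta,1,0)$ sample, pull out the deterministic $\frac{1}{\alpha}\log\gamma$ term from the density change of variables, and identify the remaining empirical average with the differential entropy by the law of large numbers (note that your derivation, like the paper's own proof, yields the coefficient $\frac{1}{\alpha}\log\gamma$ rather than the $\log\gamma$ appearing in the lemma statement, which is a typo there). Your additional verification of the integrability hypothesis for the strong law is a worthwhile refinement that the paper omits entirely.
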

\begin{proof}
Since $Y$ includes samples from a  stable distribution, $Y\sim S_{\alpha}( \beta,\gamma,\mu)$ by definition, performing a change of variable to 
\begin{eqnarray}\label{trans}
Y \rightarrow \tilde{Y}&=&\frac{Y }{\gamma^{1/\alpha}}-\tilde{\mu}\\
\mathrm{where,}~\tilde{\mu}&=&\left\{\begin{array}{cc}\frac{\mu}{\gamma^{1/\alpha}} & ~\alpha\neq1 \\\frac{\mu}{\gamma}+\frac{2\beta\ln\gamma}{\pi} & ~\alpha=1\end{array}\right.\nonumber
\end{eqnarray}
we get, the {\it standard } form density $\tilde{Y}\sim S_{\alpha}( \beta,1,0)$ using Property~\ref{Scale}. Furthermore, samples from the transformed data set $\tilde{Y}=\{ \tilde{Y}_1,\ldots, \tilde{Y}_N\}$ are also distributed according to the following standard density :
\begin{equation}\label{scaleNshift}
f(Y|\alpha,\beta,\gamma,\mu)=f(\tilde{Y}|\alpha,\beta,1,0)\frac{d\tilde{Y}}{dY}=f(\tilde{Y}|\alpha,\beta,1,0)\frac{1}{\gamma^{1/\alpha}}\nonumber
\end{equation}
This implies that if we know the parameters $\alpha, \beta, \gamma$ and $\mu$ for the density generating $D_Y$
\begin{eqnarray}
\log\big[f(Y|\alpha,\beta,\gamma,\mu)\big]&=&\sum_{\lambda=1}^{N}\log f(Y_{\lambda}|\alpha,\beta,\gamma,\mu)\nonumber\\
&=& \sum_{j=1}^{N} \Big\{ -\frac{\log\gamma}{\alpha} + \log f(\tilde{Y_j}|\alpha,\beta,1,0)\Big\} \nonumber\\
&=& -N\Big( \frac{\log\gamma}{\alpha} + h(Y|\alpha,\beta)\Big)\nonumber
\end{eqnarray}
where, $h(Y|\alpha,\beta)$ is defined by
\begin{equation}\label{eq:hDef}
 h(Y|\alpha,\beta) \equiv -\frac{1}{N}\sum_{j=1}^{N} \log f(\tilde{Y_j}|\alpha,\beta,1,0)
\end{equation}
Here $\tilde{Y}_j$ and $Y_j$ are related via Equation~\ref{trans} for all $1\leq j\leq N$. Note that since the transformed variables $\tilde{Y}_j$ are samples from $f(\tilde{Y}|\alpha,\beta,1,0)$, we have the following asymptotic result for large $N$
\begin{eqnarray}
\lim_{N\rightarrow\infty}  h(Y,\alpha,\beta) &=& -\lim_{N\rightarrow\infty}\frac{1}{N}\sum_{j=1}^{N} \log f(\tilde{Y_j}|\alpha,\beta,1,0)\nonumber\\
&=&-\int_{-\infty}^{\infty} f(\tilde{Y}|\alpha,\beta,1,0)\log f(\tilde{Y}|\alpha,\beta,1,0) dY\nonumber\\
&=&H\Big[S_{\alpha}(\beta,1,0)\Big]\nonumber
\end{eqnarray}
where, $H[.]$ is the entropy of the corresponding random variable. 
\end{proof}
As things stand, the entropy $H[.]$ of stable random variables in the standard form is just as difficult to compute as the original log-likelihood and the previous lemma has just transformed one intractable quantity into another. However, there is an important class of models where we can ignore the entropy term during structure learning. These multivariate distributions have a special property that every linear combination of random variables is distributed as a stable distribution $S_{\alpha}(\beta,.,.)$ with the same $\alpha$ and $\beta$. One scenario when this is true is when the noise term is symmetric {\it i.e.} $\beta_i=0~\forall~X_i\in\mathcal{X}$. This special case is important since we later show (Lemma~\ref{symmetrization}) that every $\alpha$-SG model can be easily transformed into a partner symmetric $\alpha$-SG model with identical network topology and regression coefficients. For all practical purposes, learning the structure of symmetric $\alpha$-SG models is effectively the same as learning structure of arbitrary $\alpha$-SG models.

%	Lemma 5 \alpha-SG with \beta=0 implies every linear combination ~ S_{\alpha}(\beta,. , .)
\begin{lem}\label{LCLem}Given a symmetric $\alpha$-stable graphical model for variables in $\mathcal{X}$, 
\begin{eqnarray}
Z\equiv w^T\mathcal{X}=\sum_{X_j\in\mathcal{X}} w_jX_j&\sim& S\Big( \alpha, \beta(w)=0, \gamma(w), \mu(w)\Big)~,~\forall w\in\mathbb{R}^{|\mathcal{X}|}\nonumber\\
\mathrm{if, }~\beta_i&=&0, ~\forall X_i\in\mathcal{X}\nonumber
\end{eqnarray}
\end{lem}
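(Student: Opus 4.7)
\medskip

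\noindent\textbf{Proof proposal.} The plan is to reduce any linear combination $Z = w^T\mathcal{X}$ to a linear combination of the independent noise variables $\{Z_j\}$ and then apply Properties~\ref{Aggregation} and~\ref{Scale} term by term. First I would fix a topological ordering on $\mathcal{X}$, so that $Pa(X_j) \subseteq \{X_1,\ldots,X_{j-1}\}$, exactly as in the proof of Lemma~\ref{BN}. By that same argument, the defining recursion $X_j = Z_j + \sum_{X_k \in Pa(X_j)} w_{jk} X_k$ can be unrolled by induction on $j$ to yield an expression of the form
\begin{equation*}
X_j = \sum_{k=1}^{j} a_{jk} Z_k, \qquad a_{jj}=1,
\end{equation*}
for coefficients $a_{jk}\in\mathbb{R}$ determined by $\Theta$. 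Consequently,
\begin{equation*}
Z = w^T\mathcal{X} = \sum_{j=1}^{|\mathcal{X}|} w_j \sum_{k=1}^{j} a_{jk} Z_k = \sum_{k=1}^{|\mathcal{X}|} c_k Z_k, \qquad c_k = \sum_{j\ge k} w_j a_{jk}.
\end{equation*}

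Next, since each $Z_k \sim S_\alpha(0,\gamma_k,\mu_k)$ is symmetric by hypothesis, Property~\ref{Scale} implies that $c_k Z_k$ is again stable with skew parameter $\mathrm{sign}(c_k)\cdot 0 = 0$, regardless of whether $\alpha=1$ or $\alpha\ne 1$, so the $\alpha=1$ correction in the location only shifts $\mu$ and does not reintroduce skew. This gives $c_k Z_k \sim S_\alpha(0, \gamma_k', \mu_k')$ for some $\gamma_k', \mu_k'$ depending on $c_k$. Because the $Z_k$ are mutually independent, I would then apply Property~\ref{Aggregation} inductively on the partial sums: the combined skew after aggregating two symmetric stable summands is
\begin{equation*}
\beta = \frac{0\cdot \gamma_1' + 0\cdot\gamma_2'}{\gamma_1'+\gamma_2'}=0,
\end{equation*}
so symmetry is preserved at each step and the final sum satisfies $Z\sim S_\alpha(0,\gamma(w),\mu(w))$ for some $\gamma(w),\mu(w)$, which is what the lemma asserts.

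The statement does not require closed forms for $\gamma(w)$ or $\mu(w)$, which sidesteps the only real subtlety. The part that requires slight care is the $\alpha=1$ branch of Property~\ref{Scale}, where the location parameter of $c_k Z_k$ acquires a logarithmic correction $-\frac{2\gamma_k \beta_k \ln|c_k|}{\pi}$; however, this correction is proportional to $\beta_k = 0$ and therefore vanishes in the symmetric setting, so the induction on aggregation goes through cleanly for all admissible $\alpha$. The degenerate cases $c_k = 0$ (which drop the corresponding summand) and $w = 0$ (which gives a point mass, interpreted as the $\gamma\to 0$ limit) are handled trivially. In summary, the proof reduces to unrolling the lower-triangular linear transformation from $\{Z_k\}$ to $\{X_j\}$, applying Property~\ref{Scale} once per noise term, and then iterating Property~\ref{Aggregation} to confirm that symmetry is closed under independent sums.
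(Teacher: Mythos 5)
Your proof is correct, but it takes a genuinely different route from the paper's. The paper does not touch the noise decomposition at all: it invokes the general projection formulas for $\beta(w)$ and $\gamma(w)$ of a multivariate stable law as integrals against the spectral measure $\Lambda$ (citing Samorodnitsky and Taqqu), substitutes the explicit finite spectral measure guaranteed by Lemma~\ref{asg} --- which for $\beta_k=0$ places equal mass at the antipodal points $\pm c_k/\|c_k\|_2$ --- and concludes $\beta(w)=0$ because the integrand $|w^Ts|^\alpha\,\mathrm{sign}(w^Ts)$ is odd while the measure is even. You instead unroll the unit-diagonal lower-triangular transformation from Lemma~\ref{BN} to write $w^T\mathcal{X}=\sum_k c_k Z_k$ over the independent noise terms, then close the argument with Property~\ref{Scale} (scaling preserves $\beta=0$, and the $\alpha=1$ logarithmic location correction is killed by $\beta_k=0$) and an induction on Property~\ref{Aggregation}. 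Your route is more elementary and self-contained --- it needs neither the spectral-measure representation nor Lemma~\ref{asg}, and it delivers the explicit form $\gamma(w)=\sum_k|c_k|^\alpha\gamma_k$ as a byproduct; you also correctly dispose of the degenerate cases $c_k=0$ and $w=0$ (note $c=0$ forces $w=0$ by invertibility of the triangular transformation). What the paper's route buys is consistency with the machinery it has already built: the same projection formulas exhibit exactly where symmetry of the spectral measure enters, and the argument would extend verbatim to any multivariate stable law with a symmetric spectral measure, not only those arising as $\alpha$-SG models. Both proofs are sound.
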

\begin{proof}
The dispersion $\gamma(w)$ and skewness $\beta(w)$ for the projection $w^T\mathcal{X}$ of any $d$-dimensional stable random density is given by~\citep{Samorodnitsky94}
\begin{eqnarray}
\gamma(w) &=&\int_{S_d} |w^Ts|^\alpha\Lambda(ds)\nonumber\\
\beta(w) &=& \gamma(w)^{-1} \int_{S_d}\mathrm{sign}(w^Ts)|w^Ts|^\alpha\Lambda(ds)\nonumber
\end{eqnarray}
Since, $\mathcal{X}$ represents a symmetric $\alpha$-stable graphical model, Lemma~\ref{asg} implies
\begin{eqnarray}
\beta(w)&=& \sum_{k=1}^{d}\frac{|w^Tc_k|_2^\alpha \gamma_k}{2\gamma(w)}
\int_{S_d}\Big\{\delta(s-\frac{c_k}{|c_k|_2}) +\delta(s+\frac{c_k}{|c_k|_2})\Big\}|w^Ts|^\alpha\mathrm{sign}(w^Ts)ds\nonumber\\
&=& 0\nonumber
\end{eqnarray} 
\end{proof}

We are now in a position to present the main contribution of this paper : an alternative criterion for model selection that is both computationally efficient and comes with robust theoretical guarantees (Lemma~\ref{MDC_BIC}).  The criterion is called {\it minimum dispersion criterion (MDC)} and is a penalized version of a technique previously used in signal processing literature for designing filters for heavy-tailed noise~\citep{Stuck}.

%	MDC definition
\begin{definition} Given a data set $D=\{D_1, \ldots, D_{N} \}$, the {\it penalized dispersion score} $S_{MDC}(B|D)$ for a Bayesian network $B(G,\Theta)$ is defined as,
\begin{equation}
S_{MDC}(B|D)= -\sum_{X_i\in \mathcal{X}} \Big\{N\frac{\log\gamma_i}{\alpha} + \frac{|Pa(X_i)|}{2}\log N \Big\}\nonumber
\end{equation}
The {\it minimum dispersion criterion (MDC)} selects the Bayesian network that maximizes this score over the space of all directed acyclic graphs $G$ and parameters $\Theta$.
\end{definition}

%	Lemma 6 ( MDC ~ BIC for S-a-LS models )
\begin{lem}\label{MDC_BIC} Given a data set $D=\{D_1, \ldots, D_{N}\}$ generated by a symmetric $\alpha$-stable graphical model, $B^{*}(G^*,\Theta^*)$, the minimum dispersion criterion is asymptotically equivalent to the Bayesian information criterion over the search space of all symmetric $\alpha$-stable graphical models\end{lem}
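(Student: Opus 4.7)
The plan is to rewrite $S_{BIC}$ as $S_{MDC}$ plus a term that, at the relevant parameter estimates, converges to a structure-independent constant; this makes the $\arg\max_B$ of the two scores asymptotically identical.

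The algebraic step is short. By the Bayesian-network factorization (Lemma~\ref{BN}) together with Definition~\ref{ASGdef}, the data log-likelihood decomposes over families,
\begin{equation*}
\sum_{\lambda=1}^{N}\log P_B(D_\lambda) = \sum_{j=1}^{d}\sum_{\lambda=1}^{N}\log f\bigl(Z_j^\lambda \,\big|\, \alpha, \beta_j, \gamma_j, \mu_j\bigr),
\end{equation*}
where $d=|\mathcal{X}|$ and $Z_j^\lambda$ is the residual $X_j^\lambda - \sum_{X_k\in Pa_B(X_j)} w_{jk} X_k^\lambda$ implied by the candidate structure $B$. For a symmetric $\alpha$-SG candidate, $\beta_j = 0$ for all $j$, and Lemma~\ref{lk_gamma} applied family-by-family yields
\begin{equation*}
\sum_{\lambda=1}^{N}\log f\bigl(Z_j^\lambda \,\big|\, \alpha, 0, \gamma_j, \mu_j\bigr) = -N\Bigl(\tfrac{\log \gamma_j}{\alpha} + h(Z_j\,|\,\alpha,0)\Bigr).
\end{equation*}
Substituting back into the definition of $S_{BIC}$ produces the identity
\begin{equation*}
S_{BIC}(B|D) \;=\; S_{MDC}(B|D) \;-\; N\sum_{j=1}^{d} h(Z_j\,|\,\alpha,0).
\end{equation*}

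Next I would argue that $\sum_j h(Z_j|\alpha,0)$ converges to a constant that does not depend on $B$. Because the data-generating model $B^*$ is symmetric $\alpha$-SG, Lemma~\ref{LCLem} tells us every linear combination $w^T\mathcal{X}$ is a symmetric $\alpha$-stable variable, so under any candidate structure $B$ and any regression coefficients $W_j$ the true distribution of $Z_j$ lies in $S_\alpha(0,\gamma_j^*,\mu_j^*)$. The samples $Z_j^\lambda$ therefore satisfy the hypothesis of Lemma~\ref{lk_gamma}, whose asymptotic conclusion gives $h(Z_j|\alpha,0)\to H[S_\alpha(0,1,0)]$ as soon as the candidate parameters $(\gamma_j,\mu_j)$ coincide with $(\gamma_j^*,\mu_j^*)$. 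At the BIC/MDC optima the consistency of $(\hat\gamma_j,\hat\mu_j)$ delivers this, and $\hat\alpha$ is likewise consistent for the true $\alpha^*$ (a value shared across all symmetric $\alpha$-SG candidates). Hence
\begin{equation*}
S_{BIC}(B|D) \;=\; S_{MDC}(B|D) \;-\; N\, d\, H\!\left[S_{\alpha^*}(0,1,0)\right] \;+\; o(N),
\end{equation*}
with the middle term independent of $B$, so the $\arg\max_B$ of both scores coincide in the limit.

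The main technical obstacle I expect is justifying consistency of the MLEs $(\hat\alpha,\hat\gamma_j,\hat\mu_j)$ for the stable family despite the lack of a closed-form density; this rests on standard asymptotic theory exploiting the smoothness and identifiability of the family through its characteristic function, but warrants care since the density is accessed only via Fourier inversion. A secondary subtlety is upgrading the pointwise $o(N)$ bound to one that is uniform over the finite set of candidate DAGs, so that ordinal rankings by $S_{BIC}$ and $S_{MDC}$ coincide in the limit; this follows routinely once MLE consistency is in hand.
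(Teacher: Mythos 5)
Your proposal follows essentially the same route as the paper: decompose the log-likelihood over families, apply Lemma~\ref{lk_gamma} to each residual, invoke Lemma~\ref{LCLem} to conclude that the residual of \emph{any} candidate structure and coefficient set is symmetric $\alpha$-stable so that the entropy term converges to the structure-independent constant $|\mathcal{X}|\,H[S_\alpha(0,1,0)]$. The only difference is one of emphasis — you frame the final convergence via MLE consistency, whereas the paper simply evaluates the score at the parameters matching the actual distribution of the residuals and applies the law of large numbers from Lemma~\ref{lk_gamma} — but the argument is the same.
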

\begin{proof}
First consider the contribution to {\it BIC} score from each family (ie., each random variable and its parents) separately. Let $Z_j=X_j-\sum_{X_k\in Pa(X_j)} w_{jk}X_k$ be any arbitrary set of regression coefficients for a candidate network $B(G,\Theta)$. Note that the coefficients $W_j=\{w_{jk}|X_k\in Pa(X_j)\}$ need not be the true regression coefficients $W_{j}^{*}$ and $B$ need not be the true network $B^*$. We will use the notation $Z_{i,\lambda}$ for the instantiation of $Z_i$ in sample $D_\lambda\in D$. Since $D$ includes samples from a symmetric $\alpha$-stable  graphical model, Lemma~\ref{LCLem} implies $Z_j\sim S_{\alpha}( \beta=0,\gamma_j,\mu_j)$. Therefore, using Lemma~\ref{lk_gamma}
\begin{eqnarray}
Fam(X_j, Pa(X_j)|D)&\equiv& \sum_{\lambda=1}^{N}\log\Big[f(Z_{j,\lambda}|\alpha, \beta=0, \gamma_j,\mu_j)\Big]-\frac{|Pa(X_j)|}{2}\log N\nonumber\\
&=& -N\Big(\frac{\log\gamma_j}{\alpha} + h(\tilde{Z}_j|\alpha,\beta=0) \Big) -\frac{|Pa(X_j)|}{2}\log N\nonumber
\end{eqnarray}
where, as in Equation~\ref{eq:hDef}, $Z_j$ and $\tilde{Z}_j$ are related by the transformation in Equation~\ref{trans}.
\begin{eqnarray}
\implies\frac{S_{BIC}(B|D)}{N} &=& \sum_{X_j\in\mathcal{X}}\frac{Fam(X_j, Pa(X_j)|D)}{N}\nonumber\\
 &=& -\sum_{X_j\in\mathcal{X}}\Big(\frac{\log\gamma_j}{\alpha} + h(Z_j|\alpha,\beta=0) +\frac{|Pa(X_j)|}{2N}\log N\Big)\nonumber\\
\implies \lim_{N\rightarrow\infty} \frac{S_{BIC}(B|D)}{N} &=&\lim_{N\rightarrow\infty} \frac{S_{MDC}(B|D)}{N} - |\mathcal{X}|H[S_{\alpha}(\beta=0,1,0)\big]\nonumber 
\end{eqnarray}
Since, $|\mathcal{X}|H[S_{\alpha}(\beta=0,1,0)]$ is independent of the candidate network structure and regression parameters $\{W_{j}|X_j\in\mathcal{X}\}$, we get the result that for any pair of networks $B$ and $B'$
\begin{eqnarray}
\implies\lim_{N\rightarrow\infty}\frac{1}{N}\Big(S_{BIC}(B|D)-S_{BIC}(B'|D)\Big)&=&\lim_{N\rightarrow\infty}\frac{1}{N}\Big(S_{MDC}(B|D)-S_{MDC}(B'|D)\Big)\nonumber
\end{eqnarray}
Therefore, asymptotically, $BIC$ is equivalent to $MDC$ when data is generated by a symmetric $\alpha$-SG graphical model.
\end{proof}
We now show how samples from any stable graphical model can be combined to yield samples from a partner symmetric stable graphical model with identical parameters and network topology. This transformation was earlier used by \citet{Kuruoglu01} in order to estimate parameters from skewed univariate stable densities. We should point out that the procedure described above has the drawback that symmetrized data set has half the sample size.

%	Lemma 7 symmetrized alpha LS models
\begin{lem}\label{symmetrization} Every $\alpha$-SG model can be associated with a symmetric $\alpha$-SG model with identical skeleton and regression parameters.
\end{lem}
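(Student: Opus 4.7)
The plan is to construct the partner symmetric model by the standard symmetrization trick lifted coordinate-wise to the whole network. Draw two independent instantiations $D^{(1)}$ and $D^{(2)}$ from the given $\alpha$-SG model $B(G,\Theta)$ and define the symmetrized sample by $X'_j \equiv X_j^{(1)} - X_j^{(2)}$ for every $X_j \in \mathcal{X}$. Because the structural equation $X_j = \sum_{X_k\in Pa(X_j)} w_{jk}X_k + Z_j$ is linear, subtracting the two instantiations yields $X'_j = \sum_{X_k\in Pa(X_j)} w_{jk}X'_k + Z'_j$, where $Z'_j \equiv Z_j^{(1)} - Z_j^{(2)}$. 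Hence $\{X'_j\}$ satisfies the defining structural relations of an $\alpha$-SG model on the same DAG $G$ and with the same regression coefficients $w_{jk}$, so the skeleton and all weights are preserved by construction.

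Next I would verify that the new noise variables $Z'_j$ have the required symmetric stable distribution and remain mutually independent. Applying Property~\ref{Scale} with $c=-1$, $d=0$ gives $-Z_j^{(2)} \sim S_\alpha(-\beta_j, \gamma_j, -\mu_j)$, and then Property~\ref{Aggregation} combined with the independent copy $Z_j^{(1)} \sim S_\alpha(\beta_j,\gamma_j,\mu_j)$ yields $Z'_j \sim S_\alpha(0, 2\gamma_j, 0)$, which is symmetric. Independence of the family $\{Z'_j\}_{j}$ follows because each $Z'_j$ is a function only of the $j$-th coordinates of the two samples, and the original $Z_j$'s are mutually independent both within and across the two samples.

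Putting these pieces together, $\{X'_j\}$ is distributed as an $\alpha$-SG model with the same DAG $G$, the same regression weights $W_j$, and the symmetric noise parameters $(\beta'_j, \gamma'_j, \mu'_j) = (0, 2\gamma_j, 0)$, which is exactly the claimed partner. The only subtlety to track is the $\alpha=1$ branch of Property~\ref{Scale}, which carries a $\tfrac{2\gamma\beta\ln|c|}{\pi}$ location shift; with $c=-1$ we have $\ln|c|=0$, so the shift vanishes identically and no special casing is needed. I do not anticipate a substantive obstacle beyond this bookkeeping --- the only genuine cost is the halving of the effective sample size (since each symmetrized observation consumes two independent draws), which is already acknowledged by the authors.
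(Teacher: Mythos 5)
Your proposal is correct and follows essentially the same route as the paper: the paper symmetrizes by pairing consecutive data samples and taking differences $\widehat{X}_{i,\lambda}=X_{i,2\lambda}-X_{i,2\lambda-1}$, which is exactly your construction of subtracting two independent instantiations, and then invokes Property~\ref{Aggregation} to conclude $\widehat{Z}_j\sim S_\alpha(0,2\gamma_j,0)$ with independence preserved. Your explicit check of the $\alpha=1$ branch of Property~\ref{Scale} (where $\ln|-1|=0$) is a small piece of bookkeeping the paper leaves implicit, but the argument is the same.
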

\begin{proof}
Given a data set $D=\{D_1,\ldots D_N\}$ representing any $\alpha$-SG model $B(G,\Theta)$, consider a resampled data set $\widehat{D}=\{\widehat{D_1},\ldots \widehat{D_{N_S}}\}$ with variable instantiations
\begin{equation}
\widehat{X_{i,\lambda}}=X_{i,2\lambda} - X_{i,2\lambda-1}~,~\forall \lambda\in\{1,\ldots N_S=\lfloor N/2\rfloor\}\nonumber
\end{equation}
These 'bootstrapped' data samples $\widehat{D_{\lambda}}=\{\widehat{X_{i,\lambda}}|X_i\in\mathcal{X}\}$ represent independent instantiations of random variables $\widehat{\mathcal{X}}\equiv\{\widehat{X_i}|X_i\in\mathcal{X}\}$.  Similarly, we may use the regression parameters $W$ to define resampled noise variables :
\begin{equation}\widehat{Z_j}\equiv\widehat{X_j}-\sum_{\widehat{X_k}\in Pa(X_j)} w_{jk}\widehat{X_k}\nonumber
\end{equation}
We now make two observations :
\begin{enumerate}
\item If $Z_j=X_j-\sum_{X_k\in Pa(X_j)} w_{jk}X_k\sim S_{\alpha}( \beta_j,\gamma_j,\mu_j)$, then using Property~\ref{Aggregation}
\begin{equation}\widehat{Z_j}\equiv\widehat{X_j}-\sum_{\widehat{X_k}\in Pa(X_j)} w_{jk}\widehat{X_k}\sim S_{\alpha}( \beta=0,2\gamma_j,0)\nonumber
\end{equation}
\item The transformed noise variables $\widehat{Z_j}$ are independent of each other.
\end{enumerate}
But these conditions define an $\alpha$-SG model (Definition~\ref{ASGdef}). Therefore, by Lemma~\ref{BN}, the resampled data is distributed according to a Bayesian network $\widehat{B}(G,\widehat{\Theta})$ such that
\begin{eqnarray}
\widehat{Z_j} &\equiv& \widehat{X_j}-\sum_{\widehat{X_k}\in Pa(X_j)} w_{jk}\widehat{X_k}\nonumber\\
P_{\widehat{B}}(\widehat{\mathcal{X}})&=&\prod_{j=1}^{|\mathcal{X}|}f(\widehat{Z_j}|\alpha,0,2\gamma_j,0)\nonumber\\
\widehat{\theta_j}&=&\{\alpha,\beta=0,2\gamma_j,0\}\cup W_j, ~\widehat{\Theta}=\{\widehat{\theta_j}|X_j\in\mathcal{X}\}\nonumber
\end{eqnarray}
\end{proof}
 
% StabLe implementation
\subsection{The \texttt{StabLe} Algorithm }
In this section we describe \texttt{StabLe}, an algorithm for learning the structure and parameters of $\alpha$-SG models (Algorithm~\ref{alg:StabLe}). The first step of  \texttt{StabLe} is to center and symmetrize the entire data matrix $D_I$ in terms of the variables $\widehat{\mathcal{X}}$, as described in Lemma~\ref{symmetrization}. This is followed by estimating the global parameter $\alpha$ using the method of $\log$ statistics~\citep{Kuruoglu01}.  Finally, structure learning is performed by  a modified version of the {\it ordering-based search} (OBS) algorithm (Section~\ref{OBS}). The details of parameter estimation and structure learning algorithms are discussed next.
\begin{algorithm}[htb!]
   \caption{{\tt StabLe }}
   \label{alg:StabLe}
\begin{algorithmic}
   \STATE {\bfseries Input:} Input data matrix $D_I$, number of random restarts Nreps
   \STATE {\bfseries Output:} $\alpha$-SG model $B(G, \Theta)$ over $\mathcal{X}$	
   \STATE $D \leftarrow Symmetrized(D_I)$ \hspace{3.2cm}{\tt // Symmetrize the data as per Lemma~\ref{symmetrization}}
  \STATE Estimate $\alpha$ from $D$ \hspace{4.9cm}{\tt // Use log-statistics,  Equation~\ref{eq : alpha_est}} 
  \STATE Initialize $B(G,\Theta) = \emptyset$
  \FOR{i =1 {\bfseries to} Nreps} 
  \STATE Initialize a random ordering $\sigma$
\STATE $B_\sigma(G,\Theta)=OBS(D,\alpha,\sigma)$ \hspace{2.5cm}{\tt // Ordering-based search,  Algorithm~\ref{alg:OBS}} 
\IF{$S_{MDC}(B_\sigma|D)>S_{MDC}(B|D)$}
\STATE $B=B_\sigma$
\ENDIF
\ENDFOR
\end{algorithmic}
\end{algorithm}

\subsubsection{Parameter Learning}\label{ParamL}
 
First, we describe the algorithms {\tt StabLe} uses to estimate the characteristic exponent $\alpha$ from the data matrix $D$, as well as the parameters $\Gamma=\{\gamma_j|X_j\in\mathcal{X}\}$ and $W_j=\{w_{jk}|X_k\in Pa(X_j)\}$ for  any given  directed acyclic graph $G$. 

%Estimating Alpha
\paragraph{ Estimating the global parameter $\alpha$ :}
Log statistics can be used to estimate the characteristic exponent $\alpha$ from the centered and symmetrized variables in $\widehat{\mathcal{X}}$~\citep{Kuruoglu01}. 
\subparagraph{Algorithm:}Since every linear combination of variables in $\widehat{\mathcal{X}}$ has the same $\alpha$, if we define 
\begin{displaymath}\widehat{X}=\sum_{i=1}^{|\widehat{\mathcal{X}}|}\widehat{X_i}~,\mathrm{~then}\end{displaymath}
\begin{eqnarray}\label{eq : alpha_est}
\alpha&=&\Big(\frac{L_2}{\psi_1}-\frac{1}{2}\Big)^{-1/2}\\
L_2 &\equiv&\mathbb{E}\Big[\big(\log|\widehat{X}|-\mathbb{E}[\log|\widehat{X}|]\big)^2\Big]\nonumber\\
\psi_1&\equiv&\frac{d^2}{dy^2}\Gamma(y)\bigg|_{y=1}=\frac{\pi^2}{6}\nonumber
\end{eqnarray}

%Estimating Gamma
\paragraph{Estimating the dispersion $\gamma_j$,  and regression parameters $W_j =\{w_{jk}| X_k\in Pa(X_j)\}$ }
 If $ \gamma_j(W_{j})$ is the dispersion parameter for the distribution of $Z_j = X_j-\sum_{X_k \in Pa(X_j)} w_{jk} X_{k} $, then the minimum dispersion criterion selects regression parameters
\begin{equation}
W^{*}_{j}=\arg\min \frac{1}{\alpha} \log \gamma_j(W_{j})\nonumber
\end{equation}
Minimum dispersion regression coefficients are estimated using a connection between the $l_p$-norm of a stable random variable and the dispersion parameter $\gamma$~\citep{Zolotarev57, Kuruoglu01}. 
\begin{lem}\label{FLOM}If $Z\sim S_{\alpha}(0,\gamma,0)$, then 
\begin{equation}
E(|Z|^p)= C(p,\alpha)\gamma^{p/\alpha} ~~\forall-1<p<\alpha\nonumber
\end{equation}
where,
\begin{equation}
C(p,\alpha)=\frac{\Gamma(1-\frac{p}{\alpha})}{\Gamma(1-p)\cos(p\frac{\pi}{2})}\nonumber
\end{equation}
\end{lem}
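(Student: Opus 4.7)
The plan is to reduce the claim to the standardized case $\gamma=1$ using the scale property (Property~\ref{Scale}), then compute the absolute moment of a standard symmetric $\alpha$-stable random variable via its characteristic function $\phi(q)=\exp(-|q|^{\alpha})$.

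First, since $Z\sim S_{\alpha}(0,\gamma,0)$, Property~\ref{Scale} yields $Z/\gamma^{1/\alpha}\sim S_{\alpha}(0,1,0)$, so $E(|Z|^{p})=\gamma^{p/\alpha}\,E(|Z'|^{p})$ with $Z'\sim S_{\alpha}(0,1,0)$. This immediately peels off the $\gamma^{p/\alpha}$ factor, and it remains to show that $E(|Z'|^{p})=C(p,\alpha)$.

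Next, I would exploit the classical Fourier identity
\begin{equation*}
|z|^{p}=\frac{2\Gamma(1+p)\sin(p\pi/2)}{\pi}\int_{0}^{\infty}\frac{1-\cos(qz)}{q^{1+p}}\,dq,\qquad 0<p<2,
\end{equation*}
which lets me replace $|Z'|^{p}$ by an integral and then apply Fubini (the integrand is nonnegative, so this is painless). Since $Z'$ is symmetric, $E[\cos(qZ')]=\mathrm{Re}\,\phi(q)=e^{-|q|^{\alpha}}$, so
\begin{equation*}
E(|Z'|^{p})=\frac{2\Gamma(1+p)\sin(p\pi/2)}{\pi}\int_{0}^{\infty}\frac{1-e^{-q^{\alpha}}}{q^{1+p}}\,dq.
\end{equation*}
The substitution $u=q^{\alpha}$ converts the remaining integral into $\frac{1}{\alpha}\int_{0}^{\infty}(1-e^{-u})u^{-p/\alpha-1}\,du$, and an integration by parts (valid for $0<p<\alpha$) evaluates this to $\Gamma(1-p/\alpha)/p$. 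Substituting back and applying the reflection formula $\Gamma(p)\Gamma(1-p)=\pi/\sin(p\pi)$ together with $\sin(p\pi)=2\sin(p\pi/2)\cos(p\pi/2)$ and $\Gamma(1+p)=p\Gamma(p)$ collapses the prefactor to $1/[\Gamma(1-p)\cos(p\pi/2)]$, delivering $C(p,\alpha)$ exactly.

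The main obstacle is the regime $-1<p\le 0$, where the Fourier identity above does not apply directly. Here I would argue by analytic continuation in $p$: both sides of the identity $E(|Z'|^{p})=C(p,\alpha)$ are holomorphic in $p$ on a strip containing $(-1,\alpha)$ (for the left side, this uses that the density of $Z'$ is bounded and smooth at $0$ and has power-law tails of index $-1-\alpha$ by Lemma~\ref{StableTails}, so $E(|Z'|^{p})<\infty$ precisely on that strip and is analytic there). Since the two holomorphic functions agree on the open subinterval $(0,\alpha)$, they must agree on the whole strip, which completes the proof. A minor additional check is that $C(p,\alpha)$ is itself analytic on $(-1,\alpha)$, which follows from the fact that neither $\Gamma(1-p)$ nor $\cos(p\pi/2)$ has zeros or poles in that range.
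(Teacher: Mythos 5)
The paper never proves this lemma: it is quoted as a known fact with citations to Zolotarev (1957) and Kuruoglu (2001), so the comparison here is between your derivation and a bare appeal to the literature. Your proof is correct and self-contained. The scale reduction via Property~\ref{Scale} is legitimate (note that for $\alpha=1$ the extra logarithmic location shift in Property~\ref{Scale} vanishes precisely because $\beta=0$, so the reduction to $S_\alpha(0,1,0)$ works uniformly in $\alpha$); the Fourier representation of $|z|^p$, the Tonelli interchange (justified by nonnegativity), the substitution $u=q^\alpha$, and the integration by parts --- whose boundary terms vanish exactly because $0<p/\alpha<1$ --- all check out, and the reflection/duplication bookkeeping does collapse the prefactor to $C(p,\alpha)$. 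The analytic-continuation step for $-1<p\le 0$ is also the right way to escape the $0<p<2$ restriction of the Fourier identity, and your justification that $E(|Z'|^{p})$ is holomorphic on the strip $-1<\mathrm{Re}(p)<\alpha$ (bounded density at the origin, Pareto tails from Lemma~\ref{StableTails}) is sound. The one inaccuracy is your closing remark that neither $\Gamma(1-p)$ nor $\cos(p\pi/2)$ has zeros or poles on $(-1,\alpha)$: when $\alpha>1$ this is false, since $\Gamma(1-p)$ has a pole and $\cos(p\pi/2)$ a simple zero at $p=1$. What saves the argument is that the product $\Gamma(1-p)\cos(p\pi/2)$ extends analytically across $p=1$ with the nonzero value $\pi/2$, so $C(\cdot,\alpha)$ is holomorphic and the identity theorem still applies; you should state it that way. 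Compared with the paper's citation-only treatment, your route costs a page but buys a verifiable, elementary proof that makes explicit exactly where the constraints $p>-1$ and $p<\alpha$ enter.
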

Therefore, to within a constant term $\log C(p,\alpha)$, minimizing  $\frac{1}{\alpha}\log \gamma_j$ is identical to minimizing the   $l_p$-norm $\|Z_j\|_p\equiv(\sum_{\lambda=1}^{N}|Z_{j,\lambda}|^p)^{1/p} $  for $-1<p<\alpha$. 
\begin{equation}
W^{*}_{j}=\arg\min \log \Bigl(\|Z_j\|_p\Bigr)\equiv \arg\min \log \Bigl((\sum_{\lambda=1}^{N}|Z_{j,\lambda}|^p)^{1/p}\Bigr) \nonumber
\end{equation}
% IRLS algorithm
\begin{algorithm}[tb]
   \caption{IRLS {\tt //~Find the least $l_p$ norm regression coefficients}}
   \label{alg:IRLS}
\begin{algorithmic}
   \STATE {\bfseries Input:} $N$ dimensional vector for instantiations of the child node $Y$,  $N\times M$ matrix  $X$ of instantiations of the parent set $Pa(Y)$, tolerance $\epsilon$ and  $p\in(0,2]$
   \STATE {\bfseries Output:}  $M$ dimensional vector  of regression co-efficients $ W^{*} = \arg\min_{W} \| Y- XW\|_{p}$
   \STATE Initialize $W$ with OLS co-efficients $W= (X^{T}X)^{-1}(X^{T}Y)$
         \REPEAT
   \STATE Initialize buffer for current regression coefficients $\beta= W$
   \STATE Initialize a diagonal $N\times N$ matrix $\Omega$ from $\beta$ for weighted least squares regression
   \begin{displaymath}
  \Omega_{ij}=\delta_{ij}(Y_{i}-(XW)_{i})^{p-2}~\forall i,j\in\{1,\ldots N\}
   \end{displaymath}
   \STATE Update regression coefficients vector $W=(X^{T}\Omega X)^{-1}(X^{T}\Omega Y)$
   \UNTIL{$\|\beta-W\|_{2}<\epsilon$}~ {\tt // Change in regression coefficients is within tolerance}
\end{algorithmic}
\end{algorithm}
%IRLS description
\subparagraph{Algorithm:}Minimization of the $l_p$ norm is performed by the {\it iteratively least squares} (IRLS) algorithm~\citep{byrd1979convergence}. Briefly, the IRLS algorithm repeatedly solves an instance of the weighted least squares problem to achieve successive estimates for the least $l_p$ norm coefficients (Algorithm~\ref{alg:IRLS}).  IRLS is attractive since rigorous convergence guarantees can be given~\citep{Daubechies} and the method is easy to implement since  several software packages are available for the weighted least squares problem. Even though the IRLS objective is no longer convex for $p<1.0$, \citet{Daubechies} show that under certain sparsity conditions, the algorithm can recover the true solution. Simulations described in Section~\ref{Simulations} tend to support this observation.

For experiments described in this manuscript, {\tt StabLe} used two values of $p$ for $l_p$-norm estimation. For learning regression coefficients during structure learning, IRLS was implemented with $p=\alpha/1.01$, since lower values tended to give noisier estimates (possibly due to numerical errors). However, we also found that estimating the term $\log C(p,\alpha)$ is prone to numerical errors for small values of $|\alpha-p|$. Therefore, we ignore this constant term during structure learning since it is common to all candidate structures. {\tt StabLe} estimates the dispersion parameters $\gamma_j$ after structure learning,  by computing the $l_p$-norm for $p=\alpha/10.0$ and applying Lemma~\ref{FLOM}.

% Estimating Gamma
%\paragraph{Estimating the s $\gamma_j$,  and regression parameters $W_j =\{w_{jk}| X_k\in Pa(X_j)\}$ }

% MDC algorithm Vs OLS algorithm for parameter learning

\begin{algorithm}[tb!]
   \caption{K2Search}
   \label{alg:K2}
\begin{algorithmic}
   \STATE {\bfseries Input:} Symmetrized data matrix $D$, fixed ordering $\sigma$ and shape parameter $\alpha$
     \STATE {\bfseries Output:} $\alpha$- SG model $B(G, \Theta)$ given the ordering $\sigma$
\STATE Initialize $B(G, \Theta) = \emptyset$
   \FOR{$i=2$ {\bfseries to} $|\mathcal{X}|$}
   \STATE {\tt //~Find the optimal parent set $Pa(\sigma_i)$ by greedily }\\
  {\tt //~ adding edges starting from $Pa(\sigma_i)=\emptyset$}
  \REPEAT
   \STATE Initialize $noChange = true$    
   \STATE Initialize $best= FS(\sigma_i,Pa(\sigma_i)|D, \alpha)$
   \STATE $AddPa=\emptyset$ \hspace{5.6cm}{\tt //~Search for a potential parent}
   \FOR{$X_j\in \{\sigma_1\ldots \sigma_{i-1}\}\setminus Pa(\sigma_i)$}   
   \STATE Estimate regression weights $W_{\sigma_i}$ for parent set $Pa(\sigma_i)\cup X_j$ using IRLS							
  	 \IF{$FS(\sigma_i,Pa(\sigma_i)\cup X_j|D, \alpha)> best$}
   \STATE $best=FS(\sigma_i, Pa(\sigma_i)\cup X_j|D, \alpha )$	 \hspace{3cm} {\tt //~Update best score and } 		
   \STATE $AddPa=X_j$	\hspace{6.2cm}{\tt // possible new parent }
   \STATE $noChange = false$
   \ENDIF
   \ENDFOR
   \STATE $Pa(\sigma_i)= Pa(\sigma_i)\cup AddPa$  \hspace{5.2cm} {\tt //~Add the new parent} 	
   \UNTIL{$noChange$ is $true$}   \hspace{4.4cm}{\tt //~Repeat until local optimum}
   \ENDFOR

\end{algorithmic}
\end{algorithm}

\begin{algorithm}[tb!]
   \caption{OBS {\tt //~Find the optimal $\alpha$-SG model using OBS}}
   \label{alg:OBS}
\begin{algorithmic}
   \STATE {\bfseries Input:} Symmetrized data matrix $D$, shape parameter $\alpha$, initial ordering $\sigma$
   \STATE {\bfseries Output:} $\alpha$-SG model $B(G, \Theta)$ over $\mathcal{X}$	
   	
\STATE Initialize SG model $B$=K2Search($D$, $\sigma$, $\alpha$)\hspace{2cm}
	\FOR{$i=1$ {\bfseries to} $|\mathcal{X}|-1$}
    	  \STATE Initialize $T_i\sigma =Twiddle(i,\sigma)$ \hspace{1.2cm}{\tt //~New ordering $T_i\sigma$ by swapping  ${\sigma}_i$ \& $\sigma_{i+1}$}
  	  \STATE $\tilde{B}$= K2Search($D$, $T_i\sigma$, $\alpha$) \hspace{2.9cm}{\tt //~Compute the optimum $\tilde{B}$ given $T_i\sigma$}
\STATE $DS(i)=S_{MDC}(\tilde{B}|D) - S_{MDC}(B | D)$  \hspace{1cm}{\tt //~ Set Delta score for the twiddle}
 	\ENDFOR

  \REPEAT 
  	\STATE Initialize $noChange = true$ 
	\STATE Find $a = \arg\max DS(i)$    \hspace{4.7cm}{\tt //~Find the best twiddle $T_a{\sigma}$}
  	  \STATE $\tilde{B}$= K2Search($D$, $T_a{\sigma}$, $\alpha$) \hspace{3.5cm}{\tt //~Compute the optimum given $T_a{\sigma}$}
  		\IF{$S_{MDC}(\tilde{B}|D)> S_{MDC}(B | D)$} 
		   \STATE $\sigma=T_a\sigma, ~ B=\tilde{B} $\hspace{4.4cm}	{\tt  //~Accept the swap and update $\sigma, B$}
		   	\STATE	 $DS(a-1)$ (if $a>1$) \hspace{2.cm}	{\tt  //~Update  delta scores for neighbors $a-1$}
			\STATE $DS(a+1)$  (if $a<|\mathcal{X}|-1$) \hspace{4.9cm}{\tt  //~ and $a+1$, if valid}
		   \STATE $noChange = false$
   		\ENDIF
  \UNTIL{$noChange$ is $true$}\hspace{4.7cm}{\tt //~Repeat until local optimum}
  
\end{algorithmic}
\end{algorithm}

% OBS for structure learning
\subsubsection{Structure Learning}\label{OBS} 
Searching the space of all network structures can be performed through any of the popular hill-climbing algorithms. In this paper we used the {\it ordering-based search} (OBS) algorithm~\citep{Teyssier} to search for a local optimum in the space of all directed acyclic graphs.  The algorithm starts with an initial ordering $\sigma$ and then learns a DAG consistent with $\sigma$ ( i.e., all parents of each node must have a lower order). This part of structure learning is performed via a subroutine K2Search (Algorithm~\ref{alg:K2}), which is a modified version of the hill-climbing based K2Search algorithm~\citet{Cooper}. K2Search starts with an empty parent set for each node $X_i\in\mathcal{X}$ and greedily adds edges until the MDC based score $FS(X_i, Pa(X_i)|D, \alpha) = -\frac{N}{\alpha} \log \gamma_i -\frac{|Pa(X_i)|}{2}\log N $ reaches a local maximum. The main difference from Gaussian graphical models~\citep{Heckerman00, Schmidt} is that K2Search scores each family based on least $l_p$ norm instead of ordinary least squares (OLS). Once K2Search has learned the locally optimum DAG for a given ordering $\sigma$, OBS explores other ordering by performing elementary operations (or `twiddles') that swap the order of successive variables and recomputes the K2Search scores. This process is continued until a local optimum. {\tt StabLe} also performs a fixed number of random restarts to explore more of the search space. In all experiments reported here we used 10 random restarts. Pseudo code for the methods is described in Algorithms~\ref{alg:OBS} and \ref{alg:K2}.

\section{Empirical Validation}\label{Validation}
In this section we describe two sets of numerical experiments to assess the performance of {\tt StabLe}. The first set is based on synthetic data representing five benchmark network topologies (Section~\ref{Simulations}). These experiments test the accuracy and robustness of MDC based learning on simulated data sets where the ground truth (structure and parameters) is known.

For the second set of experiments, we apply {\tt StabLe} to a gene expression data set  (Section~\ref{GeneEx}) from Phase III of the HapMap project~\citep{HapMap3}. These samples represent microarray measurements of mRNA expression within lymphoblastoid cells from 727 individuals belonging to eight global population groups~\citep{montgomery2010transcriptome, stranger2012patterns}.

For structure learning, we chose ordinary least squares (OLS) based BIC penalized log-likelihood $S_{OLS}(B|D)$ for comparison. 
\begin{equation}\label{eq: S_OLS}
S_{OLS}(B|D)= -\sum_{X_i\in\mathcal{X}}\Bigl \{ \log \|Z_i - \bar{Z}_i\|_2 + \frac{|Pa(X_i)|}{2}\log N \Bigr\}
\end{equation}
OLS is commonly used for learning Gaussian graphical models and should be identical to {\tt StabLe} for $\alpha=2.0$ (for that case $S_{OLS}$ and $S_{MDC}$ are the same up to a network and parameter independent term). This comparison allowed us to asses the effect of heavy-tailed noise ($\alpha<2.0$) on learning performance.

\begin{figure}[htb!]
\begin{center}
\includegraphics[scale=.56, angle=-90]{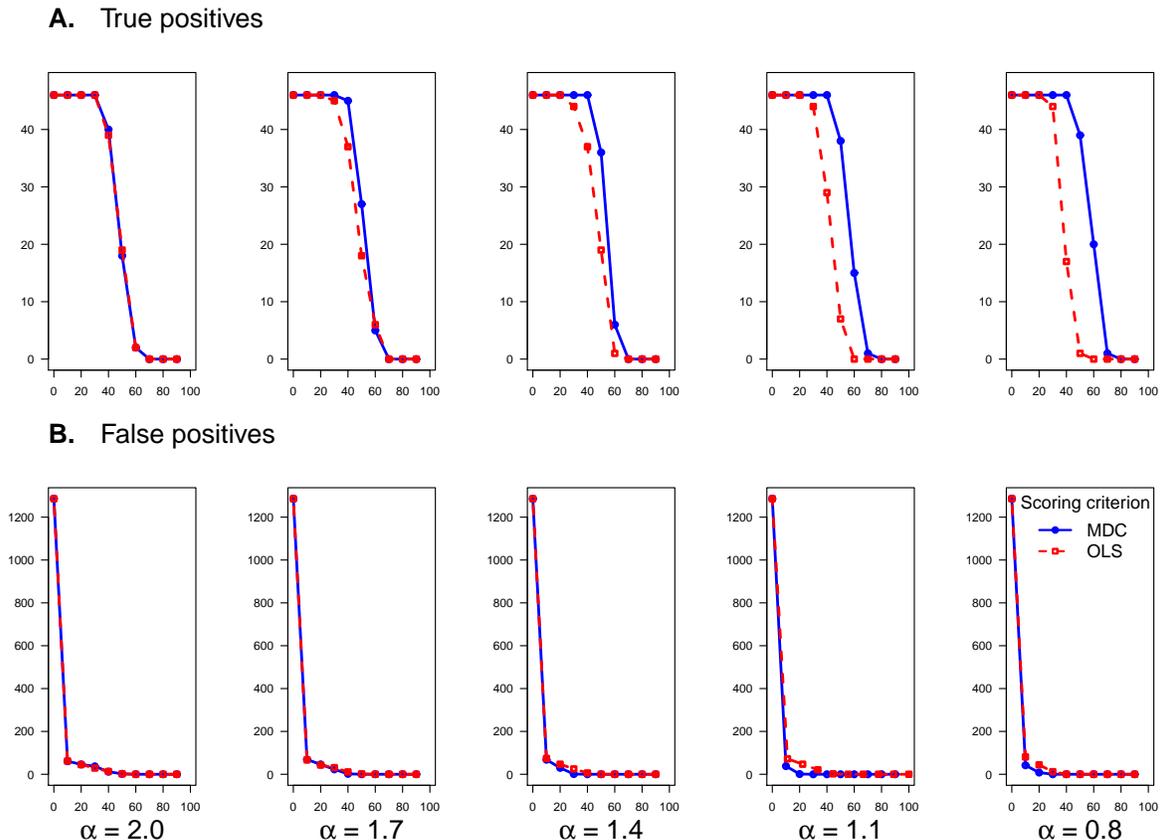}
\end{center}
\caption{The \texttt{ALARM} network - Inferred structure. Comparative performance of MDC based {\tt StabLe} algorithm (solid blue curves) versus an identical algorithm based on OLS score (dashed red curves). Vertical axes show true positives in {\bf A} and false positives in {\bf B}, for directed edges present in the input network. Horizontal axes show respective confidence (percentage of simulated data sets with the feature)}\label{fig: ALARM_TPFP}
\end{figure}

\begin{figure}[htb!]
\begin{center}
\includegraphics[scale=.4, angle=-90]{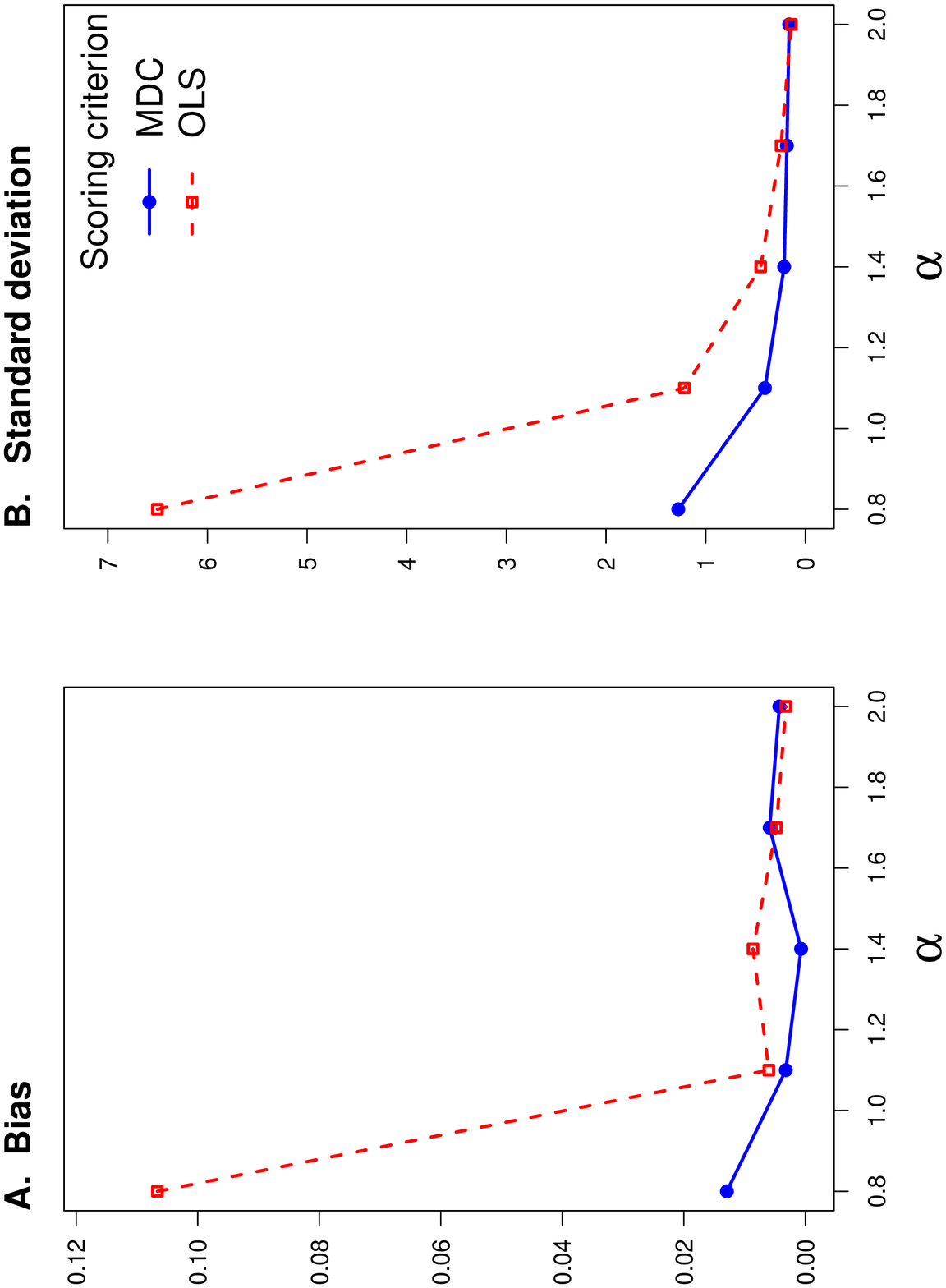}
\end{center}
\caption{The \texttt{ALARM} network - Estimated regression parameters.}\label{fig: ALARM_Reg}
\end{figure}

\begin{figure}[htb!]
\begin{center}
\includegraphics[scale=.56, angle=-90]{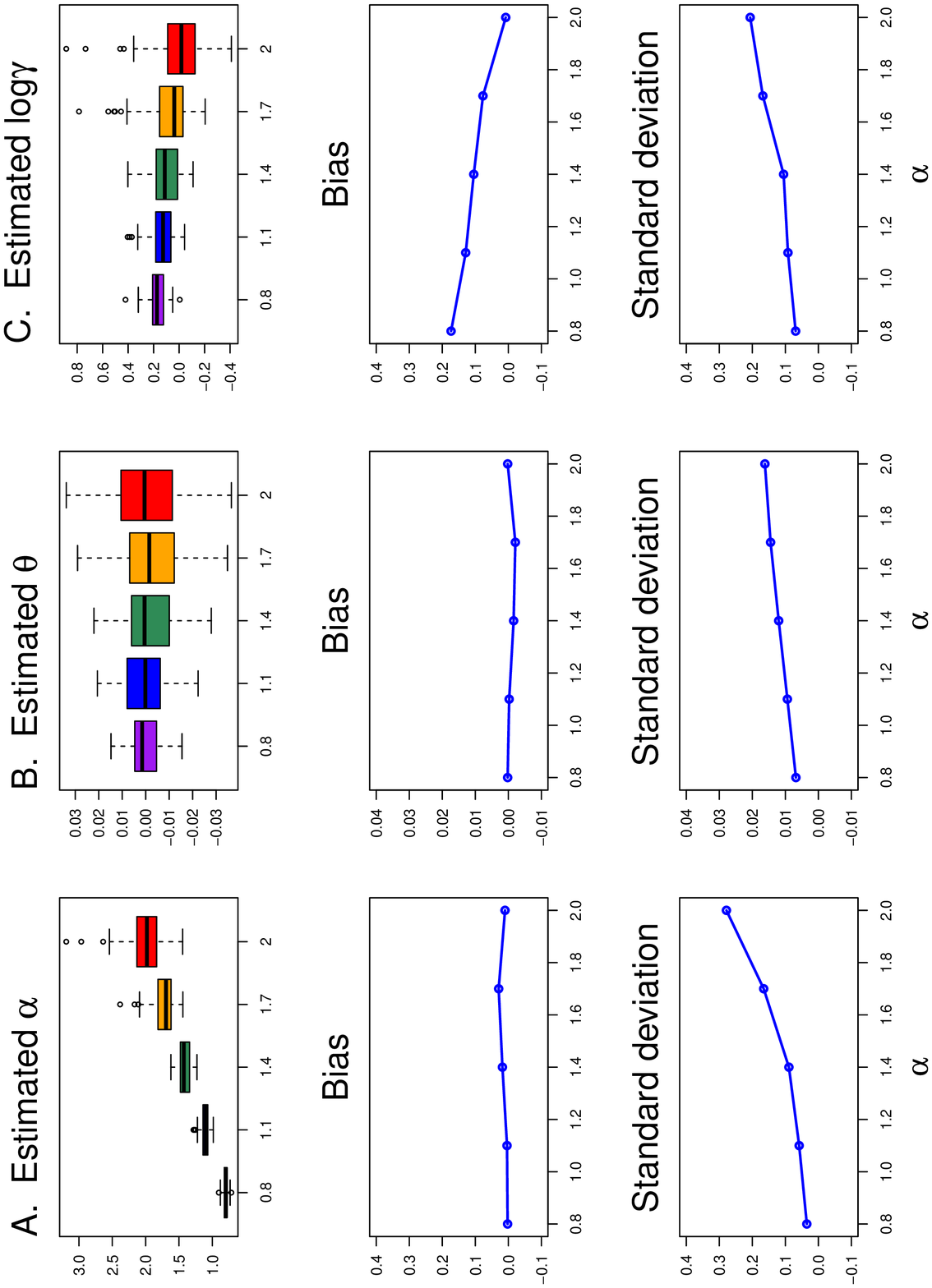}
\end{center}
\caption{The \texttt{ALARM} network - Estimated noise parameters.}\label{fig: ALARM_ABG}
\end{figure}

\subsection{Synthetic Data}\label{Simulations}
We performed numerical experiments based on simulated data sets for five network topologies from the Bayesian network repository~\footnote{A description for each network is available at {\tt http://www.cs.huji.ac.il/labs/compbio/Repository}.}. These were (number of nodes, edges within brackets) : {\tt ALARM} (37, 46), {\tt BARLEY} (48, 84), {\tt CHILD} (20, 25),  {\tt INSURANCE} (27, 52) and {\tt MILDEW} (35, 46). Adjacency matrix for each network was downloaded from the supplement to \cite{tsamardinos2006max}\footnote{Supplement can be accessed at {\tt http://www.dsl-lab.org/supplements/mmhc\_paper/mmhc\_index.html}.}. Each node $X_i\in\mathcal{X}$ was assigned an additive $\alpha$-stable noise variable $Z_i$ with same parameters $S_{\alpha}(\beta, \gamma,0)$ and each edge was assigned a regression coefficient that was sampled from $[-\frac{\rho}{2}, +\frac{\rho}{2}]$  uniformly at random. The $S_{\alpha}(\beta, \gamma,0)$ noise variable was simulated using the method of \citet{Chambers76}. For each set of experiments, we simulated 100 datasets, each with 2000 samples from an $\alpha$-SG model with randomly chosen regression weights, but fixed network topology and $\alpha$-stable noise parameters. The goal was to asses {\tt StabLe} in terms of its performance at structure learning and estimation of stable noise parameters for a variety of regression coefficients.

We performed five sets of experiments for each network, corresponding to different values of $\alpha$ = 0.8, 1.1, 1.4, 1.7, 2.0. For each set of experiments, we chose $\rho=1.0,~ \beta =0.9$ and $\gamma =1.0$. We chose such a high skew ($\beta=0.9$) in the input data to test our algorithm on its ability to symmetrize and correctly learn (possibly) difficult problem instances. Instead of $\beta$ however, we report a related parameter $\theta = \arctan( \beta \tan\alpha\frac{\pi}{2})$ which can be inferred more robustly in practice since it avoids the singularity near $\alpha =2$~\citep{Kuruoglu01}. We used the zeroth order signed moments based method for estimating $\theta$~\citep{Kuruoglu01}.

\begin{equation}\label{eq: thetaEst}
\theta_i =\frac{\alpha\pi}{2N}\sum_{\lambda=1}^{N} \mathrm{sign}(X_{i,\lambda}),~\forall ~X_i\in\mathcal{X}
\end{equation}

%	Examples of models with MDC ~ BIC

%\subsection{The \texttt{ALARM} network}
We report two set of results for each network : structure learning and parameter estimation. For convenience, we describe the results for the {\tt ALARM} network first (results for other data sets are provided in Appendix~B). 

\subsubsection{Inferred Structure}
Figure~\ref{fig: ALARM_TPFP} shows the comparative performance of MDC and OLS based approaches. Each curve shows the number of inferred directed edges. Figure~\ref{fig: ALARM_TPFP}A, B  show the number of true positives and true negatives at a given confidence level (percentage of simulated data sets where the directed edge was learnt). Solid (blue) curves show the performance of MDC and dashed (red) curves show OLS based method. The results are along expected lines with the difference between the two getting larger as $\alpha$ is varied away from 2.0. One clear trend is that while the sensitivity to true positive detection degrades for OLS (Type II errors) as $\alpha$ decreases, the MDC based method remain robust to changes in $\alpha$. Both methods are however quite reliable at not inferring incorrect edges (false positives or Type I errors). Similar behavior is observed for other data sets as well (Appendix B).

\subsubsection{Estimated Parameters}
Figure~\ref{fig: ALARM_Reg} shows the comparative performance of MDC and OLS scores in estimating regression coefficients. Figure 2A shows the bias in mean estimates (in absolute magnitude) and Figure 2B, the standard deviation around the mean in estimated coefficients and are averaged over all true positives and all simulated data sets. Note that each of the 100 simulated data set had regression coefficients sampled independently from $[-1/2, 1/2]$. Both methods perform well in terms of low bias, but OLS had a much higher standard deviation at low $\alpha$. As with structure learning, this pattern was consistently observed for other network topologies as well~(Appendix B).

We also assessed the ability of {\tt StabLe} to infer $\alpha$-stable noise parameters accurately and robustly. However, we could not show a comparative performance since OLS scores assume Gaussian noise. Figure~\ref{fig: ALARM_ABG} shows the box plot and basic statistics for the estimates for $\alpha$, $\theta$ and $\log\gamma$ from the symmetrized data set (node specific parameters $\theta$ and $\log\gamma$ are reported as averages).
\begin{eqnarray*}
\alpha~,& \theta \equiv \frac{1}{|\mathcal{X}|}\sum_i\arctan( \beta_i \tan\alpha\frac{\pi}{2}) ~,& \log \gamma =  \frac{1}{|\mathcal{X}|}\sum_i \log \gamma_i
\end{eqnarray*}

Both $\alpha$ and $\theta$ estimates have low bias and standard deviation for each of the five data sets.  But, $\log \gamma$ estimates show a clear tendency to overestimate the dispersion in noise at very low $\alpha$. This is however a difficult parameter domain for most existing methods for parameter estimation, even for univariate $\alpha$ stable densities~\citep{Kuruoglu01}. As with other inferences, the performance of {\tt StabLe} is again robust to changes in network topology~(Appendix B).

\subsection{Gene Expression Microarray Data}\label{GeneEx}
In this section, we describe two sets of analyses for gene expression microarray data from phase III of the HapMap project\footnote{Data sets can be downloaded from the Array Express database {\tt http://www.ebi.ac.uk/arrayexpress/} using Series Accession Numbers E-MTAB-198 and E-MTAB-264.}.  Our approach models the set of gene expression profiles as a multivariate stable distribution that can be represented by an $\alpha$-SG model. The first set of experiments aimed at comparing the prediction accuracy of MDC with OLS-based structure learning via ten-fold cross-validation (Section~\ref{CV}). The results of these experiments establish the utility of heavy-tailed models for gene expression profiles.

Next, we apply $\alpha$-SG models to the problem of quantifying differential expression (DE) of a gene between samples belonging to different conditions. This is a common task in gene expression-based analyses in contemporary genomics. However, popular methods for detecting differentially expressed genes usually assume the expression profile for each gene to be independent of others. Based on this assumption, DE quantification is performed by testing the null hypothesis that the $\log$-expression of each gene is identical across the observed conditions and using the corresponding p-value as a measure of DE. In Section~\ref{DE}, we develop {\tt SGEX}, a new technique for quantifying differential expression of each gene that is based on $\alpha$-SG models. We apply {\tt SGEX} to quantify the DE for a gene in each population group within the HapMap data. Contrary to most existing methods, {\tt SGEX} takes into account both the heavy-tailed behavior of gene expression densities, as well as  linear dependencies between mRNA expression of different genes.

\begin{table}[tb!]
\begin{center}\begin{tabular}{ccccc}\hline ID & Ethnicity & Location & \# Samples &\# Genes/Probes\\\hline CEU & Caucasians & Utah, USA & 109 & 21800 \\CHB & Han Chinese & Beijing, China & 80 & 21800 \\GIH & Gujarati Indians & Houston, USA & 82 & 21800 \\JPT & Japanese & Tokyo, Japan & 82 & 21800 \\LWK & Luhya & Webuye, Kenya & 83 & 21800 \\MEX & Mexican & Los Angeles, USA & 45 & 21800 \\MKK & Maasai & Kinyawa, Kenya & 138 & 21800 \\YRI & Yoruba & Ibadan, Nigeria & 108 & 21800 \\\hline \end{tabular} \caption{The HapMap III population groups and selected microarray probes  as reported by~\citet{stranger2012patterns}.}
\end{center}
\label{Populations}
\end{table}

\subsubsection{Data Normalization}\label{Normalization}
We downloaded pre-processed data for 727 individuals from eight global population groups as reported in \citet{stranger2012patterns}. Details about the eight population groups are provided in Table~\ref{Populations}. For each individual, the input data represented $\log$-intensities for 21800 microarray probes\footnote{Each selected probe mapped to a unique, autosomal Ensembl gene. Ensembl gene IDs are available at {\tt http://www.ensembl.org}.} that were quantile and median normalized, as described in the original paper \citep{stranger2012patterns}. These microarray intensities provide a measure  for mRNA concentration within a sample of lymphoblastoid cells from each individual. Before performing structure learning, we further processed each probe intensity as follows :

\begin{enumerate}
\item The log-intensity  $l(i)$ for each probe $i$ was median-centered to obtain transformed log-intensities $ml(i)$, ie., the number of samples with positive log-intensity was half (or 0.5 less than) the total (=363=$\lfloor727$). This is a standard technique for learning Gaussian graphical models from gene expression data and does not affect the network structure.
\item The median-centered log-intensities were used to assign a rank $R(i)$ to each probe $i$, in decreasing order of variance. Even for $\alpha$-stable distributions, variance of $\log$ transformed data is finite~\citep{Kuruoglu01}. This is also a standard technique for restricting computing time by selecting a subset of genes with most variation.
\item The median-centered log-intensities $\{ml(i)|R(i)\leq21800\}$ were exponentiated to $ \mathcal{I}=\{2^{ml(i)}|R(i)\leq21800\}$.
\item The exponentiated-median-centered log-intensities $\mathcal{I}_k=\{2^{ml(i)}|R(i)\leq k\leq21800\}$ for the top $k$ ranked probes were provided as input to {\tt StabLe} (for cross-validation) and {\tt SGEX} (for DE quantification, as described in Section~\ref{DE}). In the experiments reported here $k=100$.
\end{enumerate}

We estimated $\alpha$ over 1000 resampled bootstrap replicates of the data. This was meant as a diagnostic to assess the heavy-tailed nature of the intensities. As shown in Figure~\ref{fig: CV}A,  the data suggests a clear departure from a Gaussian profile.
\begin{figure}[htb!]
\begin{center}
\includegraphics[scale=.4]{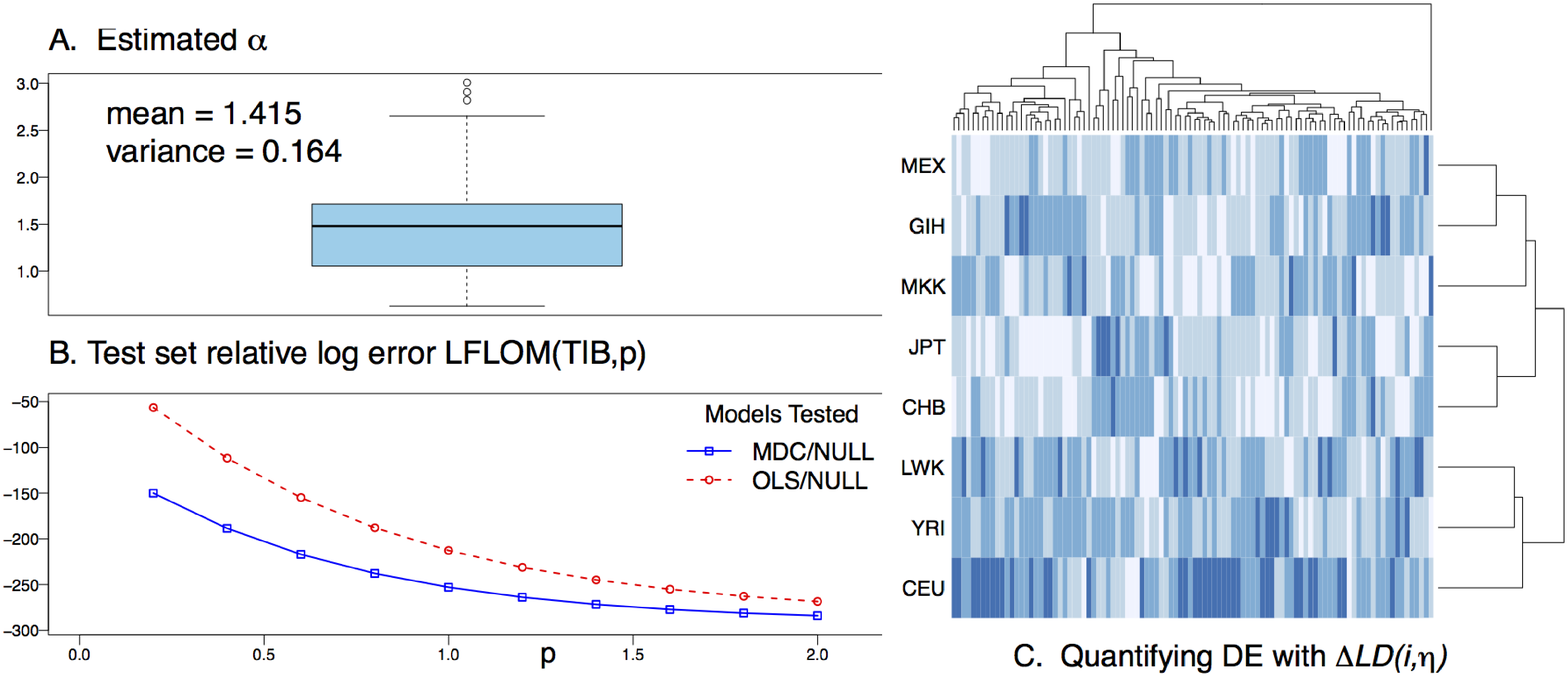}
\end{center}
\caption{Test set performance and differential expression quantification with {\tt SGEX}. {\bf A} shows a box plot of estimated $\alpha$ over 1000 bootstrap replicates. {\bf B} shows comparative Test set performance for MDC and OLS based networks relative to an empty network (no edges). {\bf C} shows a heat map of   $\Delta LD$ that quantifies differential expression of a gene. The color for each column is normalized by scaling and centering.}\label{fig: CV}
\end{figure}

\subsubsection{Cross-validation Analysis}\label{CV}
We performed a ten-fold cross-validation for the top 100 ranked probes from the HapMap data. Since we wanted to compare MDC with OLS-based learning, we report goodness of fit of the graphical model $B$ on the test set $T=\{T_1,\ldots T_N\}$ in terms of $\log$ fractional lower order moments :
\begin{equation}\label{LFLOM}
LFLOM(T|B, p)=\sum_{X_i\in\mathcal{X}}\Bigl[\frac{1}{p} \log E(|Z_{i}|^{p}) \Bigr]=\sum_{X_i\in\mathcal{X}}\Bigl[\frac{1}{p} \log E(|X_{i}-\sum_{X_j\in Pa(X_i)}w_{ij}X_j|^{p}) \Bigr]\nonumber
\end{equation}
where, $w_{ij}$ represents the regression co-efficient for the edge $(X_j, X_i)$. Clearly, if most of the variation in $X_i$ can be explained by the parent set $Pa(X_i)$, the corresponding $LFLOM$ will be small. For $p=2$, $LFLOM$ is identical to the negative log-likelihood for Gaussian graphical models\footnote{ Note that the noise term $Z_i$ has zero mean, since the data is centro-symmetrized before cross-validation.}. However, the second order moment diverges for $\alpha<2$ (Lemma~\ref{FLOM}). Therefore, $LFLOM$ provide a more robust estimate for evaluating the model on test set for heavy-tailed noise ($\alpha<2$).

Figure~\ref{fig: CV}B shows the average (over the ten-folds) of $LFLOM$ for MDC (blue) and OLS-based (red) models. In each case, the curves show the difference in $LFLOM$ between optimal (MDC or OLS) network and an empty network (NULL). This allows us to also assess the deterioration in test set performance by treating each gene as an independent random variable (a common assumption in DE quantification). Although the data set contains only 727 samples, we see a clear improvement in test set performance of $\alpha$-SG models (MDC curve) relative to Gaussian graphical models (OLS curve).

\subsubsection{Quantifying Differential Expression With {\tt SGEX}}\label{DE}
Finally, we discuss {\tt SGEX}, a new technique for quantifying differential expression using $\alpha$-SG models. {\tt SGEX} is based on cross-validation for assessing DE of a gene across different conditions. For the HapMap data, we chose each of the eight population groups in turn as the test set and learnt the optimal $\alpha$-SG model for the rest of the samples. We then estimated $\Delta LD(i,\eta)$, the change in negative log-likelihood per sample between the test set set $\eta$ and the training set as a measure of DE for each probe $i$
\begin{equation}
\Delta LD(i,\eta)=\frac{1}{p}\Bigl[ \log E_{\eta}(|Z_{i}|^{p})-\log E_{\bar{\eta}}(|Z_{i}|^{p})\Bigr]~,~p\in(-1,\alpha)\nonumber
\end{equation}
Here, $E_{\eta}(.)$ is the expectation value for population $\eta$ (test set) and $E_{\bar{\eta}}(.)$ for the rest (training set).  Note that Lemma~\ref{FLOM} guarantees that RHS of the previous equation is indeed independent of $p$. For the calculation reported here $p=\alpha/1.01$, just as it was during structure learning. Thus, $\Delta LD(i,\eta)$ measures the average increase (or decrease) in log-dispersion for the noise variable $Z_i$ corresponding to probe $i$ within population $\eta$. This density is represented as a heat map in Figure~\ref{fig: CV}C. We should point out that a higher (or lower) dispersion for the noise variable associated with a gene in the test set does not necessarily imply over (or under) expression  of a gene in the test set population. The change in dispersion could also be due to a change in network topology or regression coefficients for the test set population.

\section{Discussion}
In this paper we have introduced and developed the theory for efficiently learning $\alpha$-SG models from data. In particular, one of the main contributions of this paper is to show how the BIC can be asymptotically reduced to the MDC for $\alpha$-SG models. This result makes it feasible to efficiently learn the structure of these models, since the log-likelihood term does not have a closed form expression in general. We have also empirically validated the resultant algorithm {\tt StabLe} on both simulated and microarray data. In both cases, the presence of heavy-tailed noise has a clear effect on learning performance of OLS based methods. Based on these results, we recommend a bootstrapped estimation of $\alpha$ as an effective and computationally efficient diagnostic to assess the applicability of OLS based Gaussian graphical models. 

We have also described {\tt SGEX}, a new technique for quantifying differential expression from microarray data.  $\alpha$-SG models may also have wider applicability to other aspects of computational biology, especially to data from next-generation sequencing technologies. In addition to mRNA expression measurements (RNA-seq experiments), $\alpha$-SG models may prove helpful for other experiments, such as protein-DNA binding (ChIP-seq experiments) and DNA accessibility measurements (DNase-seq and FAIRE-seq experiments).

Finally, we should mention that there are several potential applications of $\alpha$-SG models beyond computational biology. In particular, image processing provides several problem instances where there is a need to relate different regions of the image. For example, functional magnetic resonance imaging (fMRI) experiments generate a series of images highlighting activity sites in the brain in response to stimuli. Bayesian networks are an effective way of modeling statistical relations between different areas of the brain and the stimuli~\citep{li11}.  Stable distributions may provide a better model for such applications. Another image processing application with potentials for  $\alpha$-SG models is remote sensing images of the earth \citep{mustafa12} where image histograms demonstrate clearly skewed and heavy tailed characteristics \citep{kuruoglu04}. Traffic modeling \citep{castillo12} and financial data analysis~\citep{bonato2012modeling} are also promising application areas.

% Acknowledgements should go at the end, before appendices and references
% Manual newpage inserted to improve layout of sample file - not
% needed in general before appendices/bibliography.
%\clearpage
\section{Software Availability}
Source code for {\tt StabLe} and data sets used here are available at \\{\tt https://sourceforge.net/projects/sgmodels/}.
%\bibliography{SGM}

\newpage
\appendix
\section*{Appendix A}
In this section we provide the proof for Lemma~\ref{specprod}

\noindent{\bf Lemma~\ref{specprod}}~{\it Every $d$-dimensional distribution with a characteristic function of the form
\begin{equation}
\Phi(q|\alpha,\tilde{\mu},\Lambda)=\prod_{k=1}^{d}\phi(c_{k}^Tq|\alpha,\beta_k,\gamma_k,\mu_k)~~\mathrm{where, }~c_k, q\in\mathbb{R}^d\nonumber
\end{equation}
represents a multivariate stable distribution with a finite spectral measure $\Lambda$.}
\begin{proof}
Assume the following ansatz for the spectral measure $\Lambda$,
\begin{eqnarray}
\Lambda_k&=&\frac{\|c_k\|_2^{\alpha}\gamma_k}{2}\Big((1+\beta_k)\delta(s-\frac{c_k}{\|c_k\|_2}) + (1-\beta_k)\delta(s+\frac{c_k}{\|c_k\|_2})\Big)\nonumber\\
\Lambda(ds)&=&\sum_k\Lambda_k  ds\nonumber
\end{eqnarray}
and location vector $\tilde{\mu}$,
\begin{eqnarray}
\eta_k(c_k|\alpha,\beta_k,\gamma_k,\mu_k)&=&\left\{
\begin{array}{cc} 
\mu_k &\alpha\neq 1\\
\mu_k-\frac{2\beta_k\gamma_k}{\pi}\log\|c_k\|_2 &\alpha=1\end{array}
\right.
\nonumber\\
\tilde{\mu} &=& \sum_{k=1}^{d}\eta_k(c_k|\alpha,\beta_k,\gamma_k,\mu_k)c_k ~\in\mathbb{R}^d\nonumber
\end{eqnarray}
Upon substitution into the parametrization in Definition~\ref{MultiDef} we get
\begin{eqnarray}
\int_{S_d}\psi(s^Tq|\alpha)\Lambda_k  ds &=& \frac{\|c_k\|_2^{\alpha}\gamma_k}{2}\Big((1+\beta_k)\psi(\frac{c_k^Tq}{\|c_k\|_2}|\alpha) +(1-\beta_k)\psi(-\frac{c_k^Tq}{\|c_k\|_2}|\alpha)\Big)	\nonumber\\
&=& \frac{\|c_k\|_2^{\alpha}\gamma_k}{2}\frac{|c_k^Tq|^\alpha}{\|c_k\|_2^\alpha}\Big((1+\beta_k)(1-\imath \mathrm{sign}(c_k^Tq) r(\frac{c_k^Tq}{\|c_k\|_2},\alpha)) \nonumber\\
&+& (1-\beta_k)(1+\imath \mathrm{sign}(c_k^Tq)r(\frac{c_k^Tq}{\|c_k\|_2},\alpha))\Big)	\nonumber\\
\implies \int_{S_d}\psi(s^Tq|\alpha)\Lambda_k . ds &=& \gamma_k|c_k^Tq|^{\alpha}\Big(1-\imath \beta_k\mathrm{sign}(c_k^Tq)r(\frac{c_k^Tq}{\|c_k\|_2},\alpha)\Big)\nonumber\\
&=&\gamma_k|c_k^Tq|^{\alpha}\Big(1-\imath \beta_k\mathrm{sign}(c_k^Tq)r(c_k^Tq,\alpha)\Big)\nonumber\\
&-& \imath \beta_k\gamma_k|c_k^Tq|^\alpha\mathrm{sign}(c_k^Tq)\Big(r(\frac{c_k^Tq}{\|c_k\|_2},\alpha)-r(c_k^Tq,\alpha)\Big) \nonumber
\end{eqnarray}
\begin{eqnarray}
\mathrm{Since, }~r(\frac{c_k^Tq}{\|c_k\|_2},\alpha)-r(c_k^Tq,\alpha)&=&\left\{
\begin{array}{cc} 
0 &\alpha\neq 1\\
\frac{2}{\pi}\log\|c_k\|_2 &\alpha=1\end{array}
\right.
\nonumber\\
\imath \beta_k\gamma_k|c_k^Tq|^\alpha\mathrm{sign}(c_k^Tq)\Big(r(\frac{c_k^Tq}{\|c_k\|_2},\alpha)-r(c_k^Tq,\alpha)\Big)
&=&\left\{
\begin{array}{cc} 
0 &\alpha\neq 1\\
\imath c_k^Tq\Big(\frac{2\beta_k\gamma_k}{\pi}\log\|c_k\|_2 \Big)&\alpha=1\end{array}
\right.
\nonumber\\
&=&\left\{
\begin{array}{cc} 
\imath c_k^Tq(\mu_k -\mu_k) &\alpha\neq 1\\
\imath c_k^Tq\Big(  \mu_k - \mu_k +\frac{2\beta_k\gamma_k}{\pi}\log\|c_k\|_2 \Big)&\alpha=1\end{array}
\right.\nonumber \\
&=&\imath c_k^Tq\Big(\mu_k-\eta_k(c_k|\alpha,\beta_k,\gamma_k,\mu_k)\Big)\nonumber
\end{eqnarray}
\begin{eqnarray}
\implies\int_{S_d}\psi(s^Tq|\alpha)\Lambda_k . ds &=& -\log\phi(c_k^Tq|\alpha,\beta_k,\gamma_k,\mu_k) + \imath \eta_k(c_k|\alpha,\beta_k,\gamma_k,\mu_k)c_k^Tq\nonumber\\
\implies\log\Big(\Phi(q|\alpha,\tilde{\mu},\Lambda)\Big)&=&-\int_{S_d}\psi(s^Tq|\alpha)\Lambda(ds) +\imath\tilde{\mu}q \nonumber\\
&=& -\sum_{k=1}^{d}\int_{S_d}\psi(s^Tq|\alpha)\Lambda_k . ds + \imath\sum_{k=1}^{d}\eta_k(c_k|\alpha,\beta_k,\gamma_k,\mu_k)c_k^Tq\nonumber\\
\implies\log\Big(\Phi(q|\alpha,\tilde{\mu},\Lambda)\Big)&=&\sum_{k=1}^{d}\log\phi(c_k^Tq|\alpha,\beta_k,\gamma_k,\mu_k) \nonumber\\
\implies\Phi(q|\alpha,\tilde{\mu},\Lambda)&=&\prod_{k=1}^{d}\phi(c_{k}^Tq|\alpha,\beta_k,\gamma_k,\mu_k)\nonumber
\end{eqnarray}
\end{proof}
\clearpage

\section*{Appendix B}
\subsection*{The \texttt{BARLEY} network }
\begin{figure}[htb!]
\begin{center}
\includegraphics[scale=.44, angle=-90]{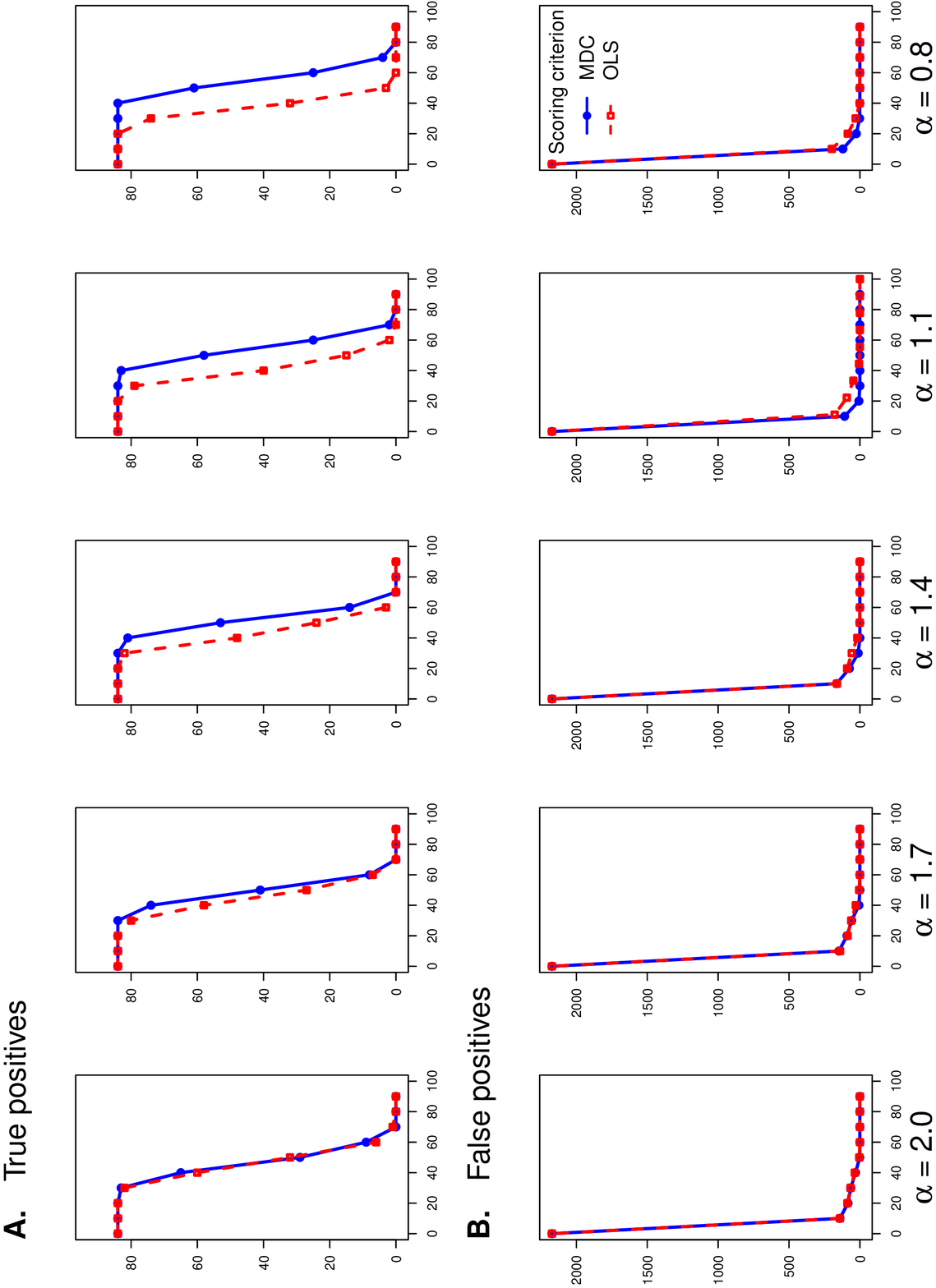}
\end{center}\caption{The \texttt{BARLEY} network - Inferred structure}\label{fig: BARLEY_TPFP}
\end{figure}
\begin{figure}[htb!]
\begin{center}
\includegraphics[scale=.4, angle=-90]{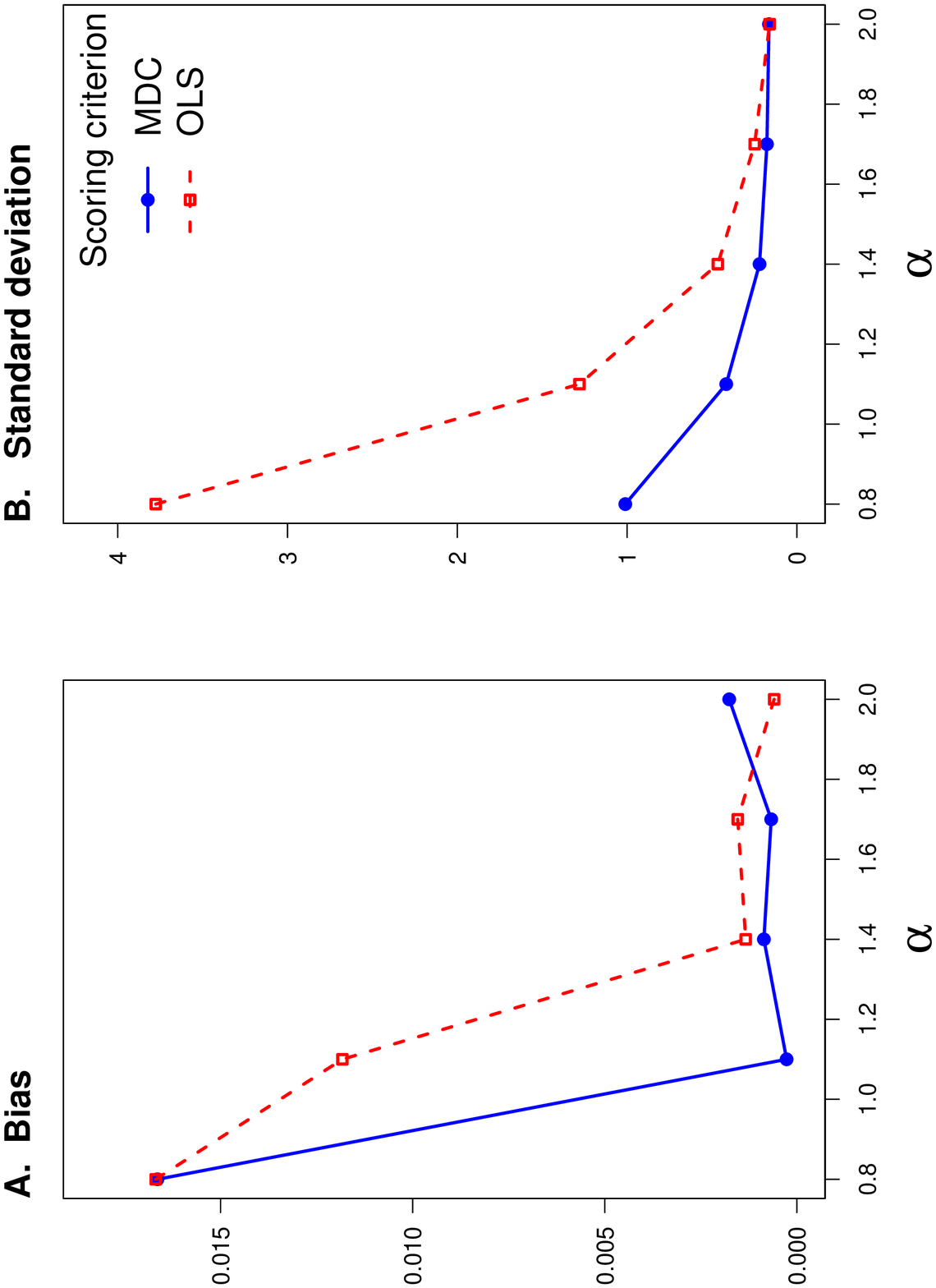}
\end{center}
\caption{The \texttt{BARLEY} network - Estimated regression parameters.}\label{fig: BARLEY_Reg}
\end{figure}
\begin{figure}[b!]
\begin{center}
\includegraphics[scale=.56, angle=-90]{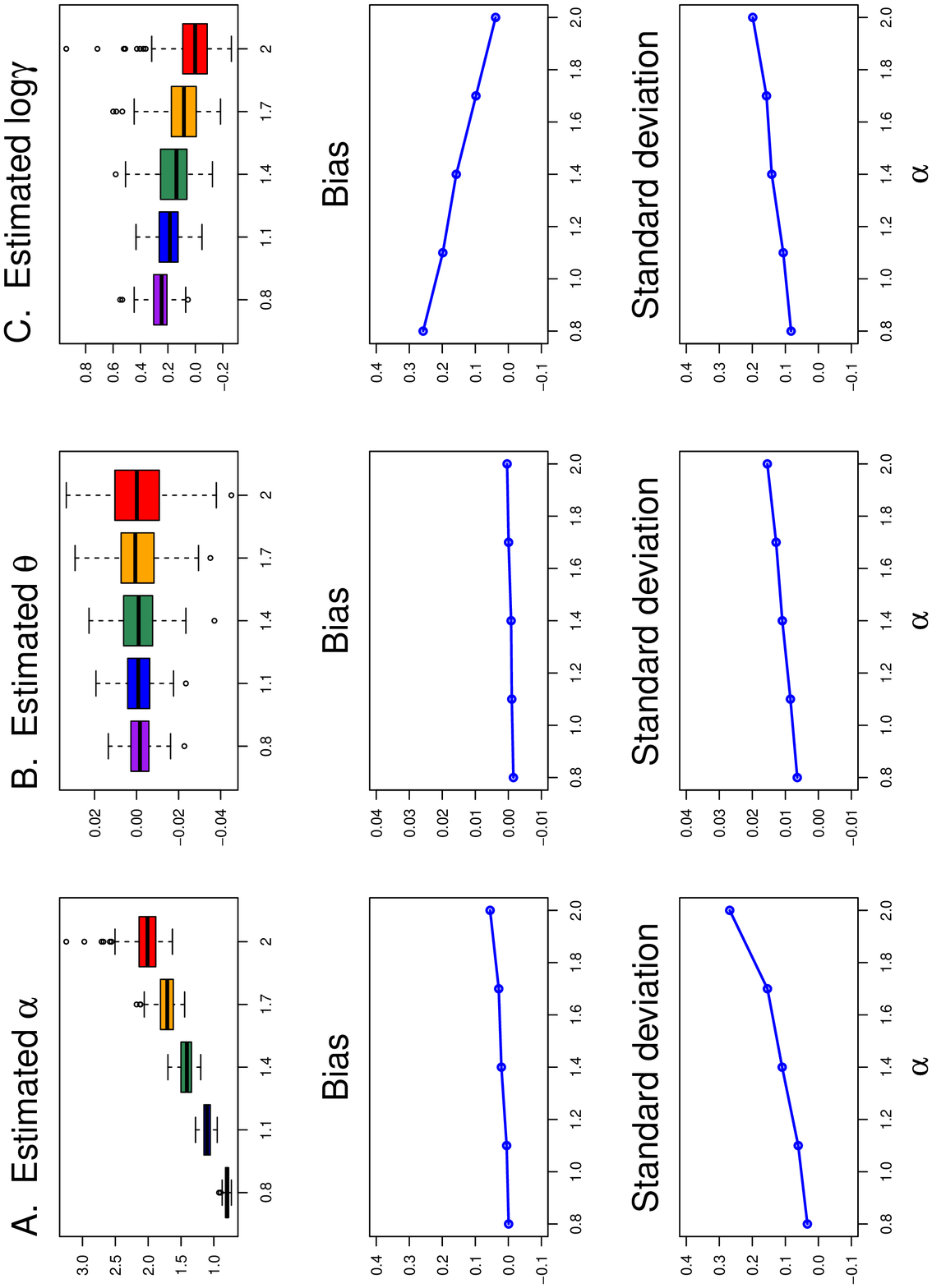}
\end{center}\caption{The \texttt{BARLEY} network - Estimated noise parameters}\label{fig: BARLEY_ABG}
\end{figure}

\clearpage

\subsection*{The \texttt{CHILD network }}

\begin{figure}[htb!]
\begin{center}
\includegraphics[scale=.45, angle=-90]{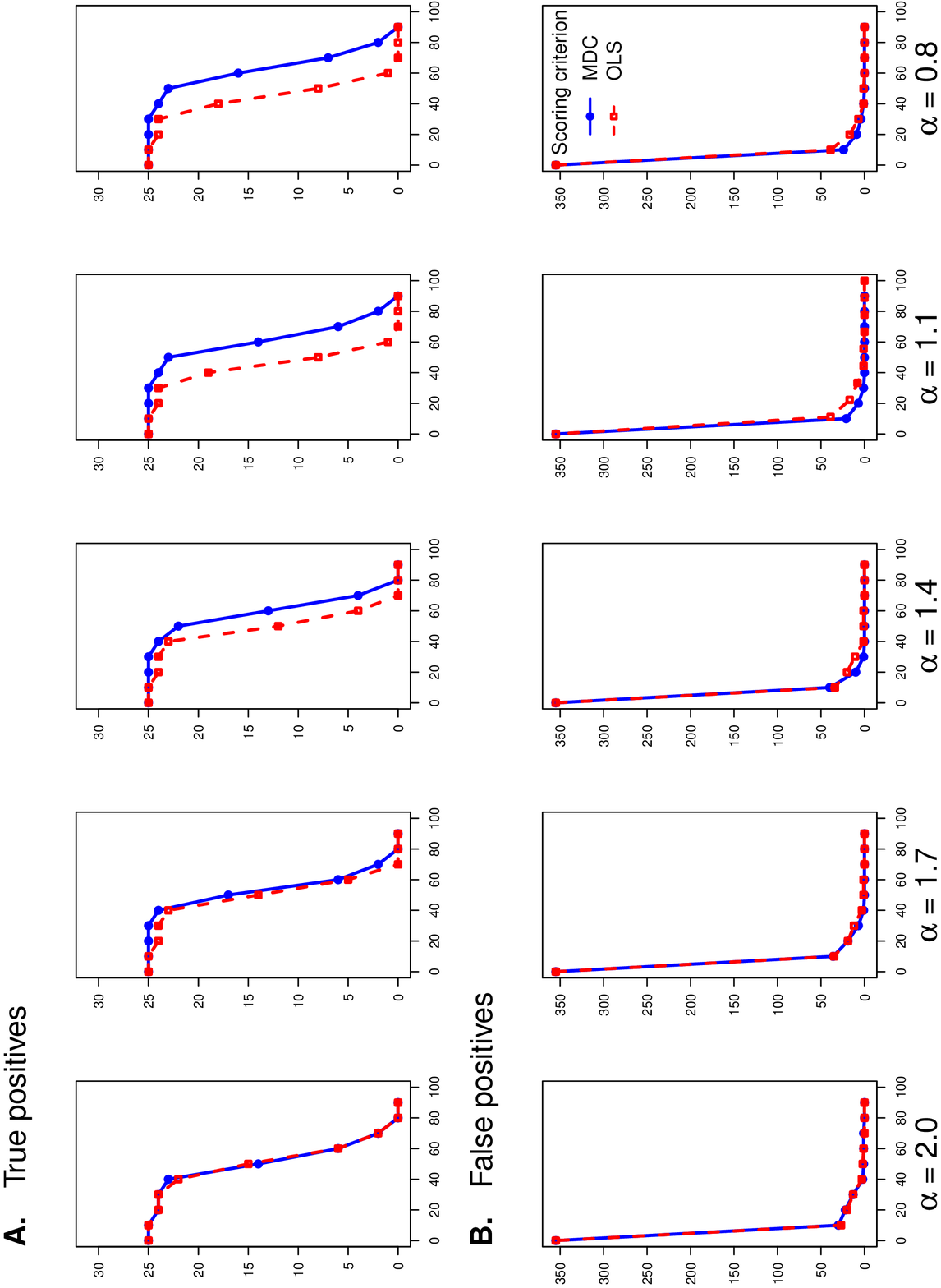}
\end{center}
\caption{The \texttt{CHILD} network - Inferred structure}\label{fig: CHILD_TPFP}
\end{figure}
\begin{figure}[htb!]
\begin{center}
\includegraphics[scale=.4, angle=-90]{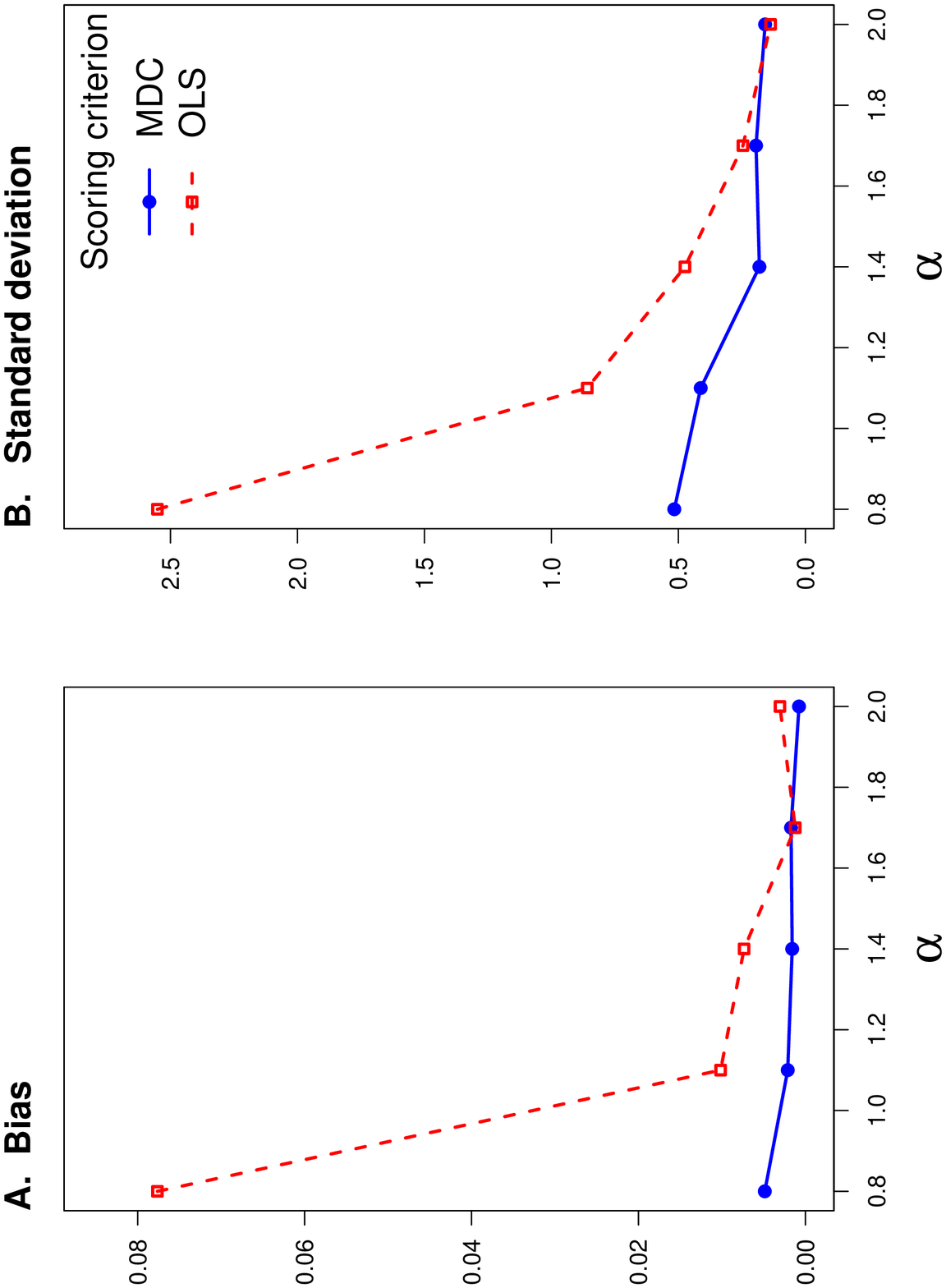}
\end{center}
\caption{The \texttt{CHILD} network - Estimated regression parameters.}\label{fig: CHILD_Reg}
\end{figure}
\begin{figure}[htb!]
\begin{center}
\includegraphics[scale=.56, angle=-90]{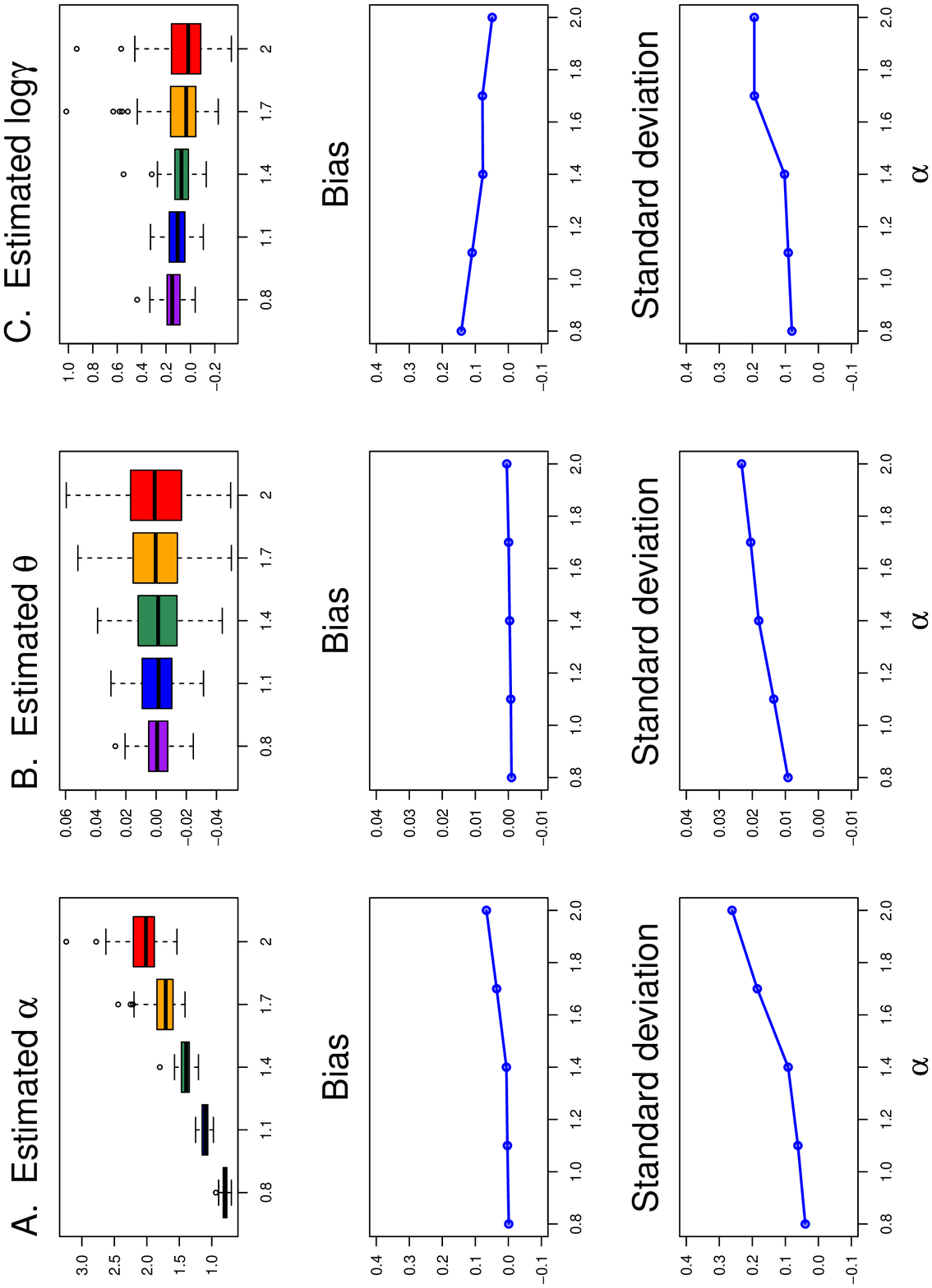}
\end{center}
\caption{The \texttt{CHILD} network - Estimated noise parameters}\label{fig: CHILD_ABG}
\end{figure}
\clearpage
\subsection*{The \texttt{INSURANCE} network }

\begin{figure}[htb!]
\begin{center}
\includegraphics[scale=.45, angle=-90]{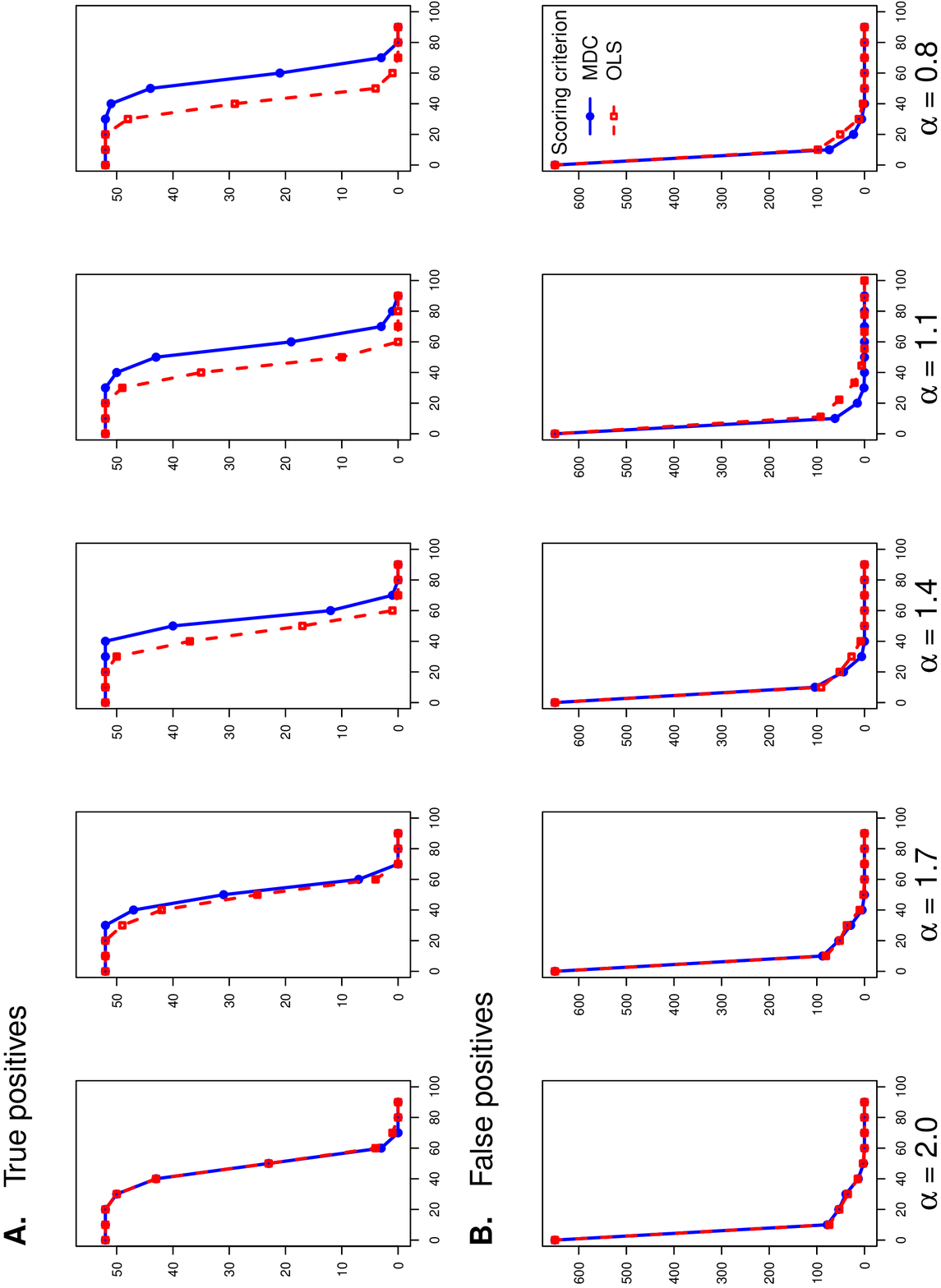}
\end{center}\caption{The \texttt{INSURANCE} network - Inferred structure}\label{fig: INSURANCE_TPFP}
\end{figure}
\begin{figure}[htb!]
\begin{center}
\includegraphics[scale=.4, angle=-90]{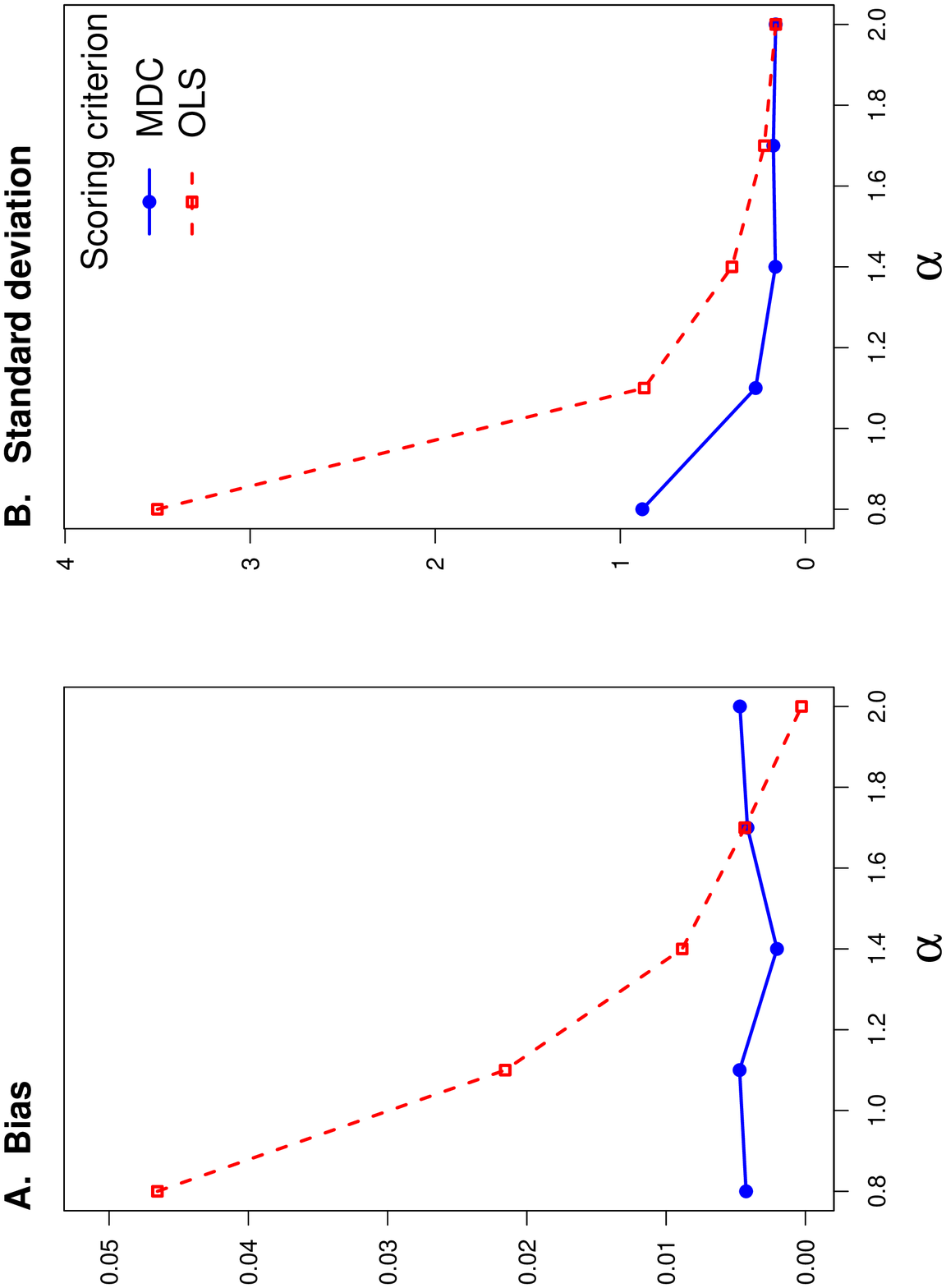}
\end{center}
\caption{The \texttt{INSURANCE} network - Estimated regression parameters.}\label{fig: INSURANCE_Reg}
\end{figure}
\begin{figure}[htb!]
\begin{center}
\includegraphics[scale=.56, angle=-90]{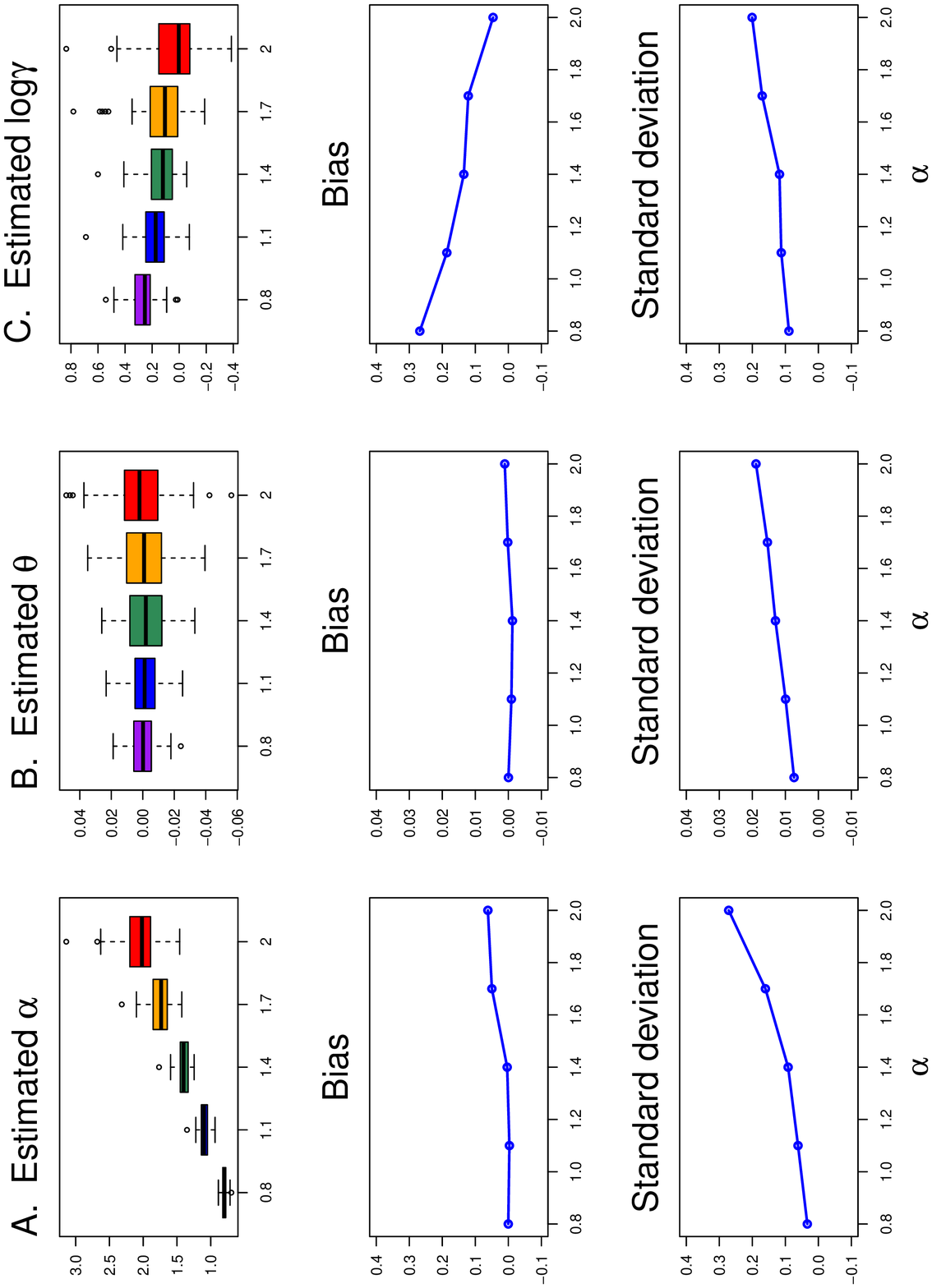}
\end{center}\caption{The \texttt{INSURANCE} network - Estimated noise parameters}\label{fig: INSURANCE_ABG}
\end{figure}
\clearpage

\subsection*{The \texttt{MILDEW} network }

\begin{figure}[htb!]
\begin{center}
\includegraphics[scale=.45, angle=-90]{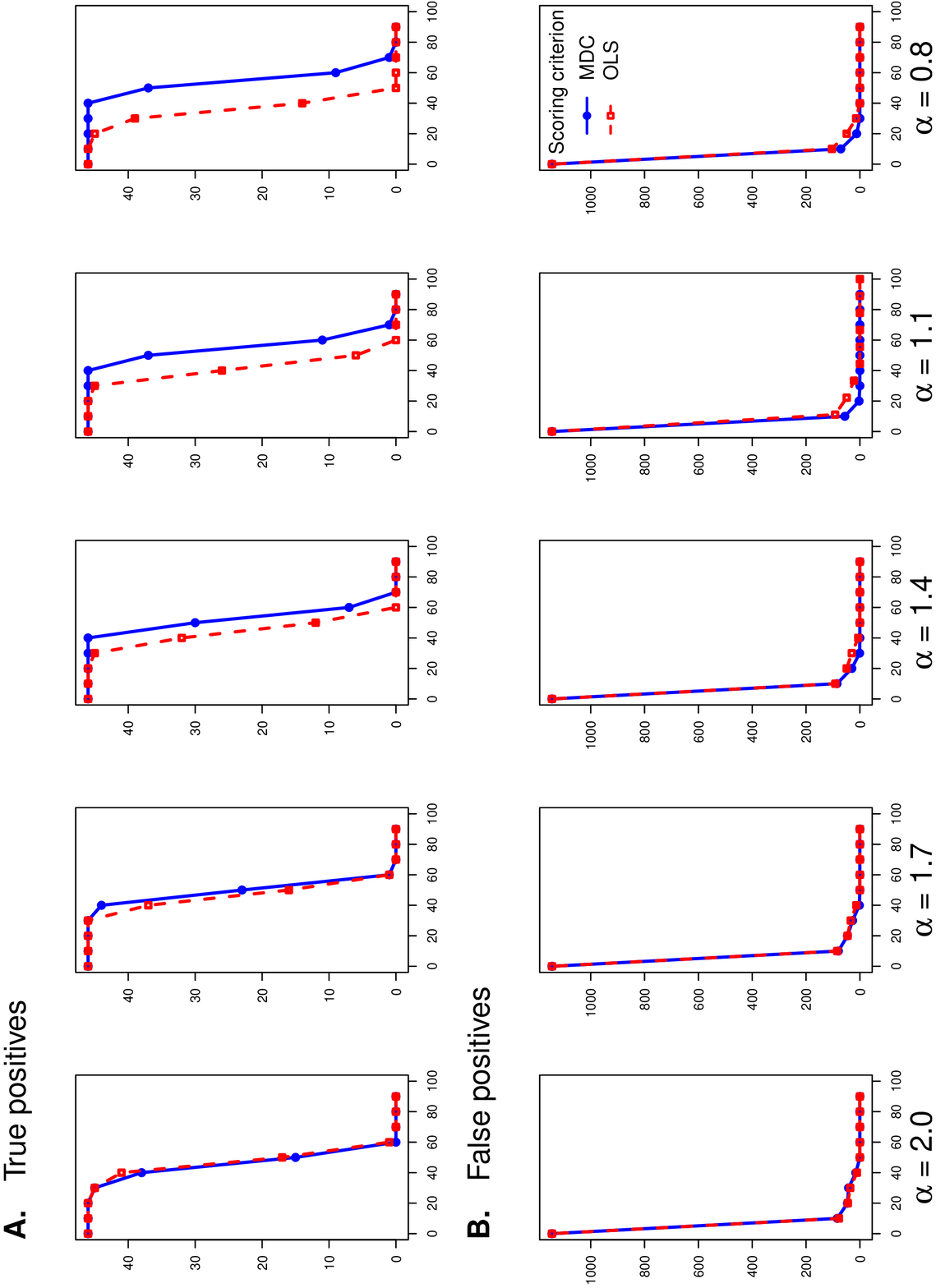}
\end{center}\caption{The \texttt{MILDEW} network - Inferred structure}\label{fig: MILDEW_TPFP}
\end{figure}
\begin{figure}[htb!]
\begin{center}
\includegraphics[scale=.4, angle=-90]{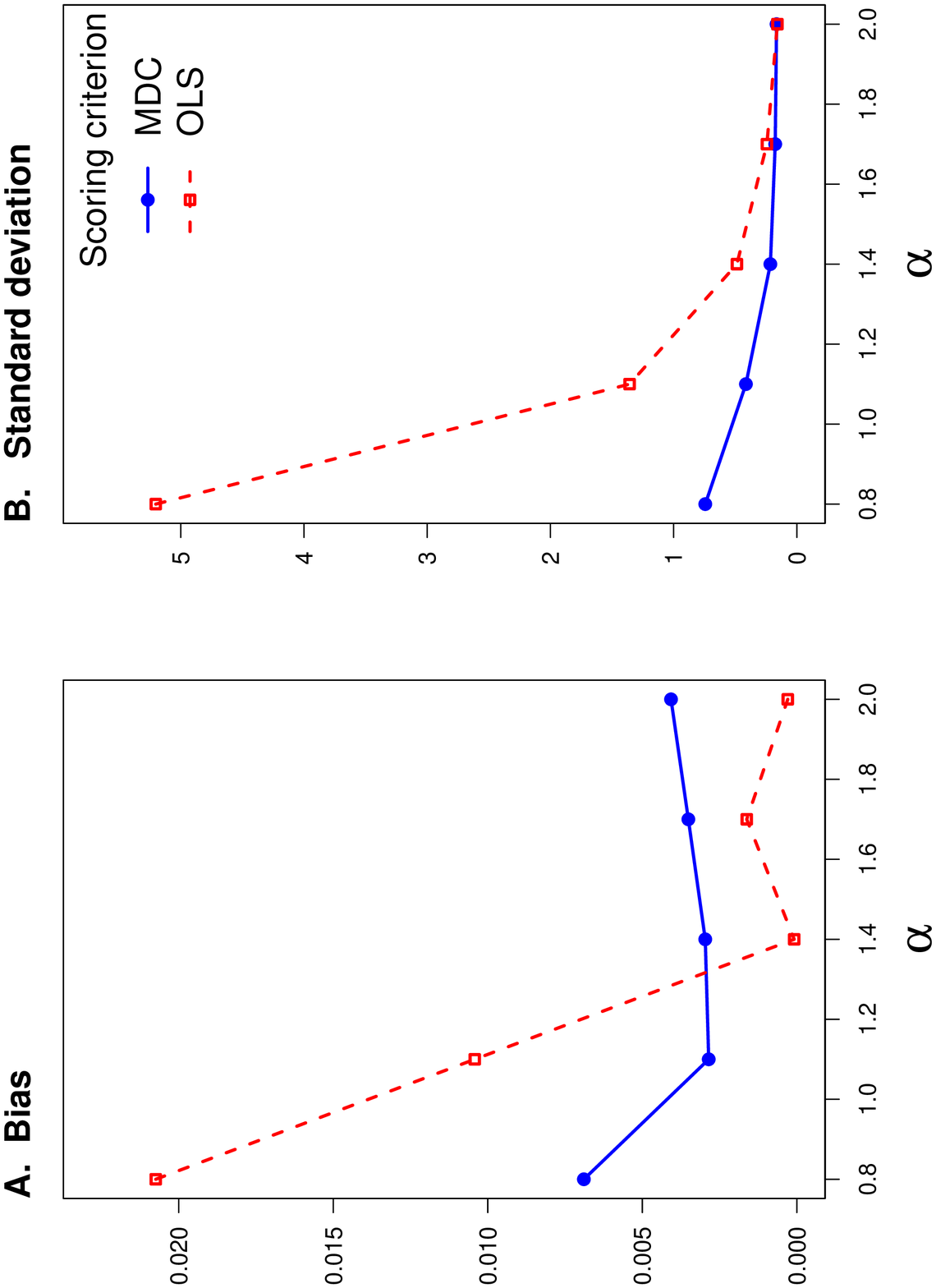}
\end{center}
\caption{The \texttt{MILDEW} network - Estimated regression parameters.}\label{fig:MILDEW_Reg}
\end{figure}
\begin{figure}[htb!]
\begin{center}
\includegraphics[scale=.56, angle=-90]{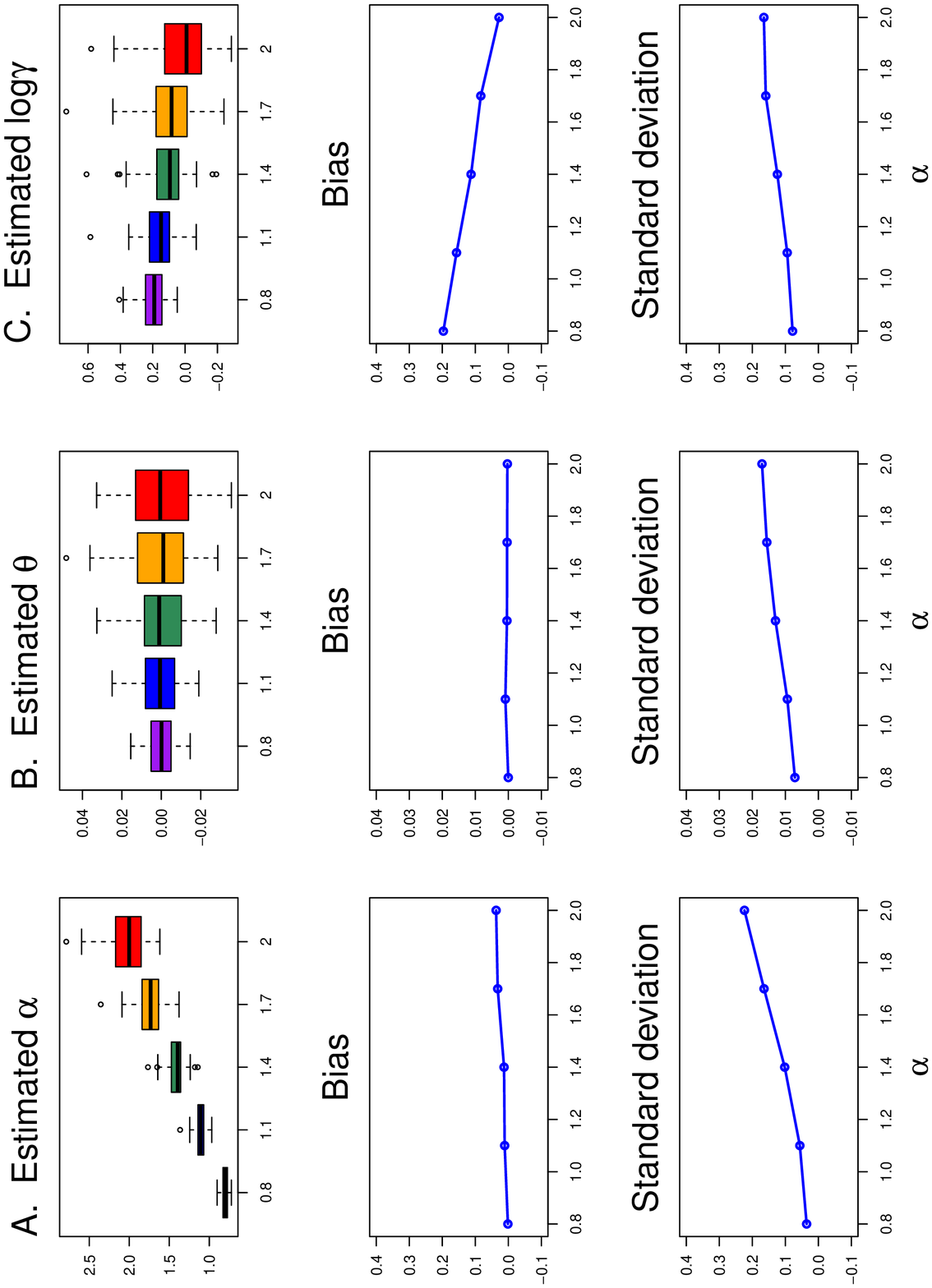}
\end{center}\caption{The \texttt{MILDEW} network - Estimated noise parameters}\label{fig: MILDEW_ABG}
\end{figure}

\end{document}